\newtheorem{theorem}{Theorem}
\newtheorem{definition}{Definition}
\newtheorem{corollary}{Corollary}
\newtheorem{proposition}{Proposition}
\useunder{\uline}{\ul}{}
\renewcommand{\algorithmicrequire}{\textbf{Input:}}
\renewcommand{\algorithmicensure}{\textbf{Output:}}
\algrenewcommand{\algorithmicrequire}{\textbf{Input:}}
\algrenewcommand{\algorithmicensure}{\textbf{Output:}}
\begin{document}

\runningtitle{Incremental Uncertainty-aware Performance Monitoring with Active Labeling Intervention}

\runningauthor{Alexander Koebler, Thomas Decker, Ingo Thon, Volker Tresp, Florian Buettner}

\twocolumn[

\aistatstitle{Incremental Uncertainty-aware Performance Monitoring \\ with Active Labeling Intervention}

\aistatsauthor{Alexander Koebler$^{*1,2}$ \quad Thomas Decker$^{*1,3,4}$ \quad Ingo Thon$^1$ \quad
Volker Tresp$^{3,4}$ \quad Florian Buettner$^{2,5,6}$ }

\aistatsaddress{ $^1$Siemens AG \quad $^2$Goethe University Frankfurt \quad $^3$LMU Munich \quad 
$^4$Munich Center for Machine Learning (MCML) \\\quad $^5$German Cancer Research Center (DKFZ)
 \quad $^6$German Cancer Consortium (DKTK)} ]
\begin{abstract}
We study the problem of monitoring machine learning models under gradual distribution shifts, where circumstances change slowly over time, often leading to unnoticed yet significant declines in accuracy. To address this, we propose Incremental Uncertainty-aware Performance Monitoring (IUPM), a novel label-free method that estimates performance changes by modeling gradual shifts using optimal transport. In addition, IUPM quantifies the uncertainty in the performance prediction and introduces an active labeling procedure to restore a reliable estimate under a limited labeling budget. Our experiments show that IUPM outperforms existing performance estimation baselines in various gradual shift scenarios and that its uncertainty awareness guides label acquisition more effectively compared to other strategies. 
 
\end{abstract}

\section{INTRODUCTION}
\begin{figure}[htb!]
	\centering
 \includegraphics[width=0.48\textwidth]{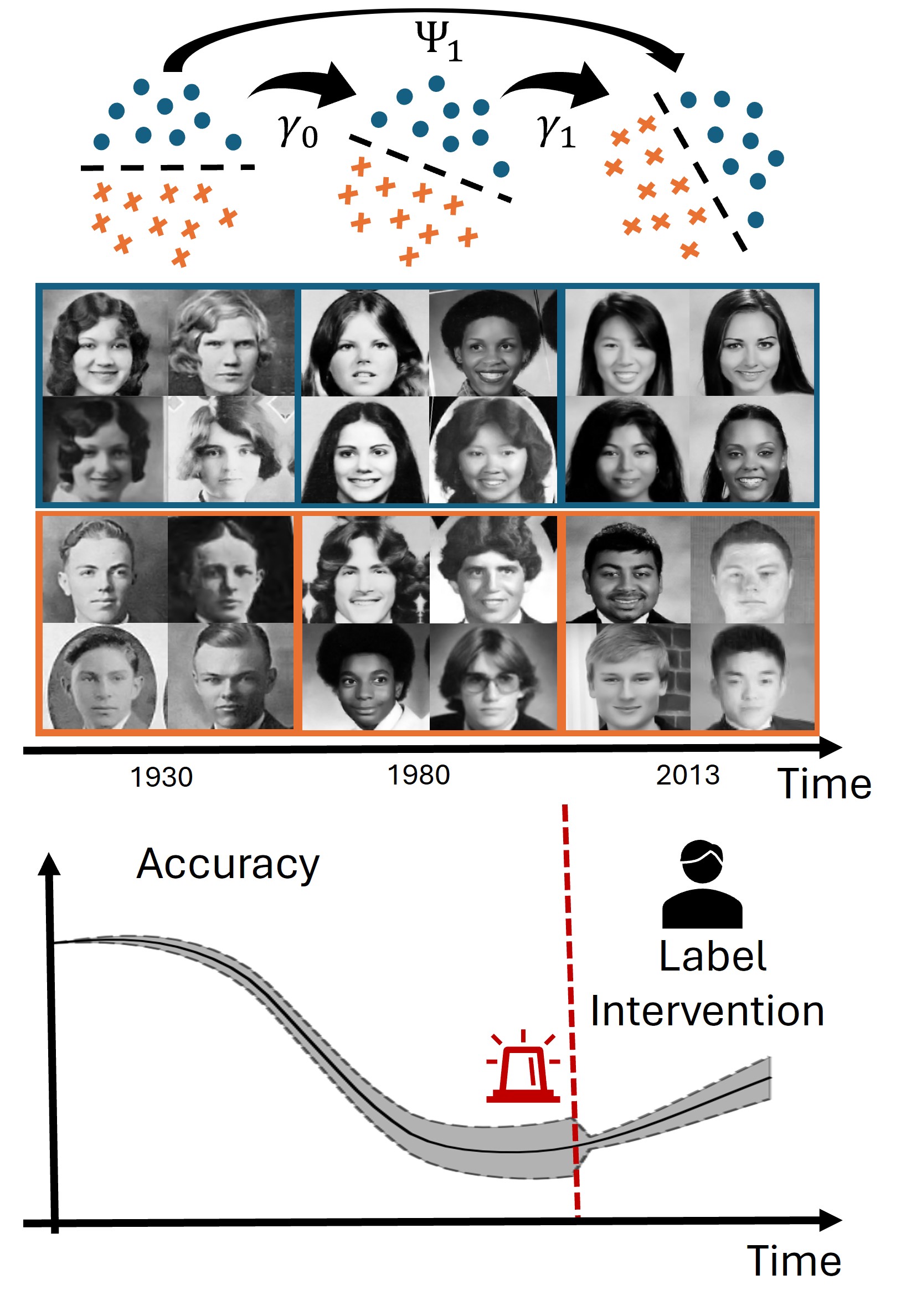}
	\caption{
 Illustration of Incremental Uncertainty-aware Performance Monitoring (IUPM) to estimate performance changes over time using only labels from the initial training distribution. By iteratively linking unlabeled data points  using optimal transport couplings $\gamma$ and combining them into an overall transition map $\Psi$, it can anticipate the true model performance under gradual shifts. IUPM also provides an inherent uncertainty measure and an active labeling procedure to efficiently reduce uncertainty and improve estimation reliability under a limited labeling budget.}
 
	\label{fig:fig1}
\end{figure}
Deployed machine learning models often face the critical challenge of distribution shifts, where the data encountered in production deviates from the data used during training. Many relevant shift scenarios involve changes over time, which are often gradual and continuous \citep{yao2022wild, xie2024evolving}. These shifts are characterized by the fact that the statistical properties of the data or the environment change progressively rather than abruptly. This property can make gradual shifts more insidious, as they may not be immediately apparent but can still lead to substantial degradation in prediction quality over time \citep{gama2014survey}. Therefore, anticipating and understanding temporal performance changes is essential for ensuring the reliability and effectiveness of a machine-learning model in dynamic environments. However, directly monitoring the performance during deployment is challenging as labeled data is often unavailable in production. Moreover, obtaining labels can be cumbersome, time-consuming, and costly, leading to delays in the assessment. Therefore, an increasing number of label-free estimation methods have been proposed that aim to anticipate the model performance purely based on unlabeled data available at runtime \citep{yu2024survey}. Such methods leverage diverse strategies to approach this task by for instance estimating performance changes based on feature statistics \citep{deng2021labels}, the agreement of multiple models \citep{jiang2022assessing, baek2022agreementontheline} or by analyzing the model's confidence \citep{guillory2021predicting,gargleveraging}. However, none is explicitly tailored to the particular nature of gradual shifts, which are ubiquitous in practice. On top of that, existing techniques suffer from two fundamental limitations. First, they cannot quantify any uncertainty related to the performance estimate, causing ambiguity about when to trust the estimate and when to rather collect extra labels to obtain a more accurate assessment. Second, current methods do not guide how to best support performance estimation in monitoring situations where a limited number of labels could be acquired. This neglects the fact that actively selecting specific data points of interest to be labeled can greatly improve the effectiveness of model evaluations under a limited labeling budget \citep{kossen2021active}.
To address these limitations we make the following contributions:
\begin{itemize}[leftmargin=16pt]
\item We propose \textbf{Incremental Uncertainty-aware Performance Monitoring (IUPM)}, as a novel label-free performance estimation method tailored to gradual distribution shifts with an inherent notion of uncertainty.
\item We introduce an active intervention step to increase the reliability of the performance estimate by labeling examples contributing the highest uncertainty to the performance estimate.
 \item We show that IUPM works provably well for a general class of gradual shifts and demonstrate its benefits over existing baselines on different datasets under synthetic and real gradual shifts.
\end{itemize}

\section{BACKGROUND AND RELATED WORK}
\paragraph{Label-free performance estimation}
Label-free performance estimation methods aim to anticipate the predictive quality of a machine learning model in out-of-distribution settings purely based on unlabeled data \citep{yu2024survey}. Many existing approaches with theoretical guarantees are typically restricted to certain shifts, such as considering a pure covariate or label shift \citep{sugiyama2008direct, chen2021mandoline,garg2020unified}, or require additional properties of the model's output confidence \citep{guillory2021predicting, lu2023characterizing}. 
On the other hand, a growing number of methods have been introduced that demonstrate promising empirical results for specific model classes and shift types. Such techniques leverage diverse strategies based on feature statistics \citep{deng2021labels}, the model's behavior under test-time augmentations \citep{deng2021does}, thresholding the model's confidence \citep{gargleveraging}, the agreement between different models \citep{jiang2022assessing, baek2022agreementontheline}, or model differences after retraining \citep{yu2022predicting}. However, none of them is explicitly tailored to gradual distribution shifts and their inherent structure or comes with an applicable notion of uncertainty, which is the scope of our work. 
\paragraph{Gradual Domain adaptation}
While the special characteristics of gradual shifts have not been considered for performance estimation, they have already been explored in the context of domain adaptation. In general, domain adaptation addresses the problem of increasing the performance of a model trained on a labeled source domain when applied to a novel target domain \citep{wilson2020survey}. Classical methods typically act in a one-shot fashion, where adaption is performed directly between the two domains. However, this approach can struggle in situations where the encountered distribution shift is particularly strong \citep{he2023gradual}. Alternatively, gradual domain adaptation \citep{kumar2020understanding, wang2020continuously, wang2022understanding} works sequentially by introducing a series of intermediate domains. These domain interpolations serve as stepping stones to reduce the complexity of the overall distribution shift, which can help to increase the adaptation effectiveness \citep{abnar2021gradual, chen2021gradual}. When monitoring a model during deployment, however, it is unclear at which point adaptation is truly necessary without having a faithful estimate of the current model performance at runtime. This additionally motivates the need for performance estimation methods that explicitly consider the gradual nature of shifts over time. 

\paragraph{Active risk estimation}
In contrast to label-free performance estimation, the field of active risk estimation \citep{sawade2010active} is concerned with reducing the number of labels required to yield a reliable estimate of the model's performance under an existing but limited labeling budget. A variety of methods \citep{sawade2010active, kossen2021active, kossen2022active, Lee2024TowardsOM} have been developed to approach this task analog to active learning by introducing different sampling strategies to query and label test samples. Most prominently, the authors in \citep{sawade2010active} introduce an importance sampling approach whereas the authors in \citep{kossen2021active, kossen2022active} utilize a surrogate model using Gaussian process models or Bayesian neural networks to estimate which samples would contribute most to the performance estimation or even use the surrogate to predict the loss of the target model directly \citep{kossen2022active}.

\paragraph{Optimal Transport} Optimal Transport (OT) aims at finding the cost-minimizing way to transform one probability measure into another \citep{peyre2019computational}. Consider having $n_0$ samples from a domain $\Omega_0 = \{x_0^i\}_{i=1}^{n_0}$ and $n_1$ samples from another domain $\Omega_1 = \{x_1^i\}_{i=1}^{n_1}$ with corresponding empirical distributions
\begin{align*}
	&\hat{p}_0 = \sum_{x_0 \in \Omega_0} \frac{1}{n_0} \delta_{x_0}
	&\hat{p}_1  = \sum_{x_1 \in \Omega_1} \frac{1}{n_1} \delta_{x_1}
\end{align*} where $\delta_{x}$ denotes the Dirac measure. For a cost function $c: \Omega_0 \times \Omega_1 \rightarrow \mathbb{R}^+$, the transformation of $\hat{p}_0$ into $\hat{p}_1$ can be formalized by a coupling $\gamma$ which represents a valid distributions over $(\Omega_0 \times \Omega_1)$ with marginals corresponding to $\hat{p}_0$ and $\hat{p}_1$. Identifying the cost-optimal coupling reads:
\begin{align*}
	\hat{\gamma} = \arg \min_{\gamma \in \Gamma} \sum_{x_0 \in \Omega_0 }\sum_{x_1 \in \Omega_1 } c(x_0, x_1) \gamma(x_0,x_1) \quad \text{with} \\ \quad \Gamma =\{\gamma \in \mathbb{R}^{n_0 \times n_1}\; |\; \gamma \mathbf{1}_{n_1} = \hat{p}_0,  \gamma^T\mathbf{1}_{n_0} = \hat{p}_1\} 
\end{align*} which can be solved using different algorithmic approaches \citep{peyre2019computational}. In the discrete sample case the obtained $\gamma \in \mathbb{R}^{n_0 \times n_1}$ simply is a matrix with entries $\gamma(x_0, x_1)$. Moreover, the conditional coupling 
\begin{align*}
	\gamma(X_0=x_0|X_1=x_1) = \frac{\gamma(x_0, x_1)}{\sum_{x_0 \in \Omega_0} \gamma(x_0, x_1)}
\end{align*} is a left-stochastic matrix whose entries can be interpreted as transition probabilities when moving from samples of $X_1$ to samples of $X_0$ following the most cost-efficient path. OT couplings can further be used to characterize how similar two distributions are using the corresponding Wasserstein distance:
\begin{align*}
    \mathcal{W}_p(P_0, P_1) = \inf_{\gamma \in \Gamma} \mathbb{E}_{(x_0, x_1) \sim \gamma}[c(x_0, x_1)^p]^{1/p}
\end{align*}
Intuitively, the Wasserstein distance describes the expected transportation cost under the optimal coupling. In the following, we will only consider the case where $p=1$ such that we denote $\mathcal{W} := \mathcal{W}_1$.
\section{INCREMENTAL UNCERTAINTY-AWARE PERFORMANCE MONITORING}
\paragraph{Problem Setup}
 Consider a machine learning model $f:\mathcal{X}\rightarrow \mathcal{Y}$ that has been trained with labeled data $(X_0, Y_0)$ from the distribution $P_0(X_0, Y_0)$ over the space $(\mathcal{X} \times \mathcal{Y})$. Suppose $f$ is deployed in order to make predictions over time $t>0$ with respect to data $\{(X_t, Y_t)\}_{t=1}^T$ each distributed with $P_t(X_t, Y_t)$. In this work, we are mainly interested in gradual shifts, which are typically characterized as follows \citep{kumar2020understanding, he2023gradual}:
 \begin{definition} A distribution shift over $\{(X_t, Y_t)\}_{t=0}^T$ is gradual in time $t=0, \dots, T$ if the Wasserstein distance between two consecutive steps is bounded:
 \begin{align*}
     \mathcal{W}(P_t(X_t, Y_t), P_{t-1}(X_{t-1}, Y_{t-1})) \le \Delta_t \quad \forall t=1, \dots, T
 \end{align*}
 \end{definition}
In our setup, we consider a slightly more refined property to characterize gradual shifts:
\begin{definition} A distribution shift over $\{(X_t, Y_t)\}_{t=0}^T$ is called gradually Lipschitz smooth in $X_t$ if there is a cost function $c: \mathcal{X}\times \mathcal{X}\rightarrow \mathbb{R}^+$ for which we have $\mathcal{W}(P_t(X_t), P_{t-1}(X_{t-1})) \le \varepsilon_t $ for all $t=1, \dots , T$ and there exist constants $L_t>0$ such that:
\begin{align*}
    \mathcal{W}\big(P_t(Y_t|X_t), P_{t-1}(Y_{t-1}|X_{t-1})\big)\le L_t \; c(X_t, X_{t-1})
\end{align*}
\end{definition}
Intuitively this property implies that if two data points in consecutive domains are close in terms of a cost function $c$, their conditional target distributions should also be similar. Note that this assumption is quite reasonable for gradual shifts in general and one can easily show that any shift that is gradually Lipschitz smooth in $X_t$ is also gradual as defined above:
\begin{proposition}
If a distribution shift over $\{(X_t, Y_t)\}_{t=0}^T$ is gradually Lipschitz smooth in $X_t$ with constants $L_t$, then it is also gradual:
\begin{align*}
    \mathcal{W}(P_t(Y_t, X_t), P_{t-1}(Y_{t-1}, X_{t-1})) \le (1+L_t)\;\varepsilon_t := \Delta_t
\end{align*}
\end{proposition}

\paragraph{Incremental Performance Estimation}
Given the setup introduced above, remember that during runtime $(t>0)$ one typically only has access to unlabeled data from $X_t$ and our goal is to estimate how well a model $f$ performs over time solely based on this information. 
To do so effectively in the context of gradual shifts we propose an incremental approach: Let $\gamma_t(X_{t-1}|X_t)$ be the conditional coupling linking data from $X_t$ to data from $X_{t-1}$ in a cost-efficient way. Further, we define $\Psi_t(X_0|X_t):=\prod_{i=1}^t \gamma_i(X_{i-1}|X_i)$ describing the transition matrix obtained from composing all incremental transition matrices $\gamma_i(X_{i-1}|X_i)$ via matrix multiplication. It expresses the overall transition probabilities of going back to the labeled data available at $t=0$ by connecting samples of two subsequent time points incrementally using an individual optimal transport coupling. Based on this we propose the following strategy to estimate missing labels for performance evaluation over time: 
\begin{align*}
	\hat{P}(Y_t|X_t)=\mathbb{E}_{\Psi_t(X_0|X_t)}\left[P(Y_0|X_0) \right] 
\end{align*} 
This means that our label estimate $\hat{P}(Y_t|X_t)$ arises as mixture distribution \citep{everitt2013finite} combining labeled data in $X_0$ according to the accumulated incremental coupling results. This strategy is explicitly motivated by temporal shifts as we leverage their gradual nature by modeling subsequent distribution shifts incrementally using Optimal Transport. 

Given a loss function $\mathcal{L}$ to measure model performance and a set of samples $\Omega_t$ from $X_t$, the resulting performance estimate for IUPM  at time $t$, denoted by ($\hat{\mathcal{L}}_t^{\textit{IUPM}})$, is given by:
\begin{align*}
	\hat{\mathcal{L}}_t^{\textit{IUPM}} &= \mathbb{E}_{P(X_t)}\mathbb{E}_{\hat{P}(Y_t|X_t)} \left[ \mathcal{L}(f(X_t), Y_t) \right] \\ &= \frac{1}{n_t} \sum_{x_t\in \Omega_t} \mathbb{E}_{\hat{P}(Y_t|X_t=x_t)} \left[ \mathcal{L}(f(X_t), Y_t) \right]
\end{align*}

The following theorem shows that this estimate will be close to the true model performance if the encountered shift is gradually Lipschitz smooth: 

\begin{theorem}
    Let $\mathcal{L}: \mathcal{Y}\times \mathcal{Y} \rightarrow \mathbb{R}$ be a loss function that is 1-Lipschitz in its second argument and denote the true model performance at time $t$ with $\mathcal{L}_t$.
    If a distribution shift over $\{(X_t, Y_t)\}_{t=1}^T$ is gradually Lipschitz smooth in $X_t$, then:
    \begin{align*}
        | \mathcal{L}_t - \hat{\mathcal{L}}_t^{\textit{IUPM}}| \le   \sum_{i=1}^t L_t \varepsilon_t
    \end{align*}
\end{theorem}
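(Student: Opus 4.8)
The plan is to write the total error as a telescoping sum of single‑step errors, where the $s$‑th step is the price of replacing the true conditional $P(Y_s\mid X_s)$ by the one obtained after transporting one further step back through $\gamma_s$. Concretely, for $0\le s\le t$ set $\Psi_{s\to t}(X_s\mid X_t):=\prod_{i=s+1}^{t}\gamma_i(X_{i-1}\mid X_i)$ (with $\Psi_{t\to t}$ the identity), define the intermediate label estimate $\hat{P}^{(s)}(Y_t\mid X_t):=\mathbb{E}_{\Psi_{s\to t}(X_s\mid X_t)}[P(Y_s\mid X_s)]$ and its induced risk $\hat{\mathcal{L}}_t^{(s)}:=\mathbb{E}_{P(X_t)}\mathbb{E}_{\hat{P}^{(s)}(Y_t\mid X_t)}[\mathcal{L}(f(X_t),Y_t)]$. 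By construction $\hat{\mathcal{L}}_t^{(t)}=\mathcal{L}_t$ (the true conditional, evaluated on the actual test samples $\Omega_t$) and $\hat{\mathcal{L}}_t^{(0)}=\hat{\mathcal{L}}_t^{\textit{IUPM}}$, so the triangle inequality gives $|\mathcal{L}_t-\hat{\mathcal{L}}_t^{\textit{IUPM}}|\le\sum_{s=1}^{t}|\hat{\mathcal{L}}_t^{(s)}-\hat{\mathcal{L}}_t^{(s-1)}|$ and it suffices to bound each summand by $L_s\varepsilon_s$.

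Next I would analyze one summand. Associativity of the matrix products gives $\Psi_{s-1\to t}=\gamma_s\,\Psi_{s\to t}$, hence $\hat{P}^{(s-1)}(Y_t\mid X_t)=\mathbb{E}_{\Psi_{s\to t}(X_s\mid X_t)}\big[Q_s(Y_s\mid X_s)\big]$ with $Q_s(Y_s\mid X_s):=\mathbb{E}_{\gamma_s(X_{s-1}\mid X_s)}[P(Y_{s-1}\mid X_{s-1})]$. For a fixed test point $x_t$, the map $y\mapsto\mathcal{L}(f(x_t),y)$ is $1$‑Lipschitz, so by Kantorovich--Rubinstein duality the difference $\big(\mathbb{E}_{P(Y_s\mid x_s)}-\mathbb{E}_{Q_s(Y_s\mid x_s)}\big)[\mathcal{L}(f(x_t),\cdot)]$ is bounded in absolute value by $\mathcal{W}\big(P(Y_s\mid x_s),Q_s(Y_s\mid x_s)\big)$, uniformly in $x_s$. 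Since $\gamma_s(\cdot\mid x_s)$ is a probability vector, $P(Y_s\mid x_s)=\sum_{x_{s-1}}\gamma_s(x_{s-1}\mid x_s)\,P(Y_s\mid x_s)$ and $Q_s(Y_s\mid x_s)=\sum_{x_{s-1}}\gamma_s(x_{s-1}\mid x_s)\,P(Y_{s-1}\mid x_{s-1})$ are mixtures against the same weights, so joint convexity of $\mathcal{W}_1$ followed by the gradual‑Lipschitz‑smoothness assumption yields $\mathcal{W}\big(P(Y_s\mid x_s),Q_s(Y_s\mid x_s)\big)\le\sum_{x_{s-1}}\gamma_s(x_{s-1}\mid x_s)\,\mathcal{W}\big(P(Y_s\mid x_s),P(Y_{s-1}\mid x_{s-1})\big)\le L_s\,\mathbb{E}_{\gamma_s(X_{s-1}\mid x_s)}[c(x_s,X_{s-1})]$.

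Finally I would take the outer expectations over $X_t\sim P(X_t)$ and $X_s\sim\Psi_{s\to t}(\cdot\mid X_t)$. The key point is that each $\gamma_i$ is an optimal coupling with marginals $P(X_{i-1})$ and $P(X_i)$, so feeding the correct marginal into the composition $\Psi_{s\to t}$ reproduces $P(X_s)$ at step $s$, and the joint law of $(X_{s-1},X_s)$ under the remaining factor $\gamma_s$ is exactly that optimal coupling. Hence $\mathbb{E}_{P(X_t)}\mathbb{E}_{\Psi_{s\to t}(X_s\mid X_t)}\mathbb{E}_{\gamma_s(X_{s-1}\mid X_s)}[c(X_s,X_{s-1})]=\mathbb{E}_{(X_{s-1},X_s)\sim\gamma_s}[c(X_{s-1},X_s)]=\mathcal{W}\big(P(X_{s-1}),P(X_s)\big)\le\varepsilon_s$ (using $p=1$). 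Combining the last two displays gives $|\hat{\mathcal{L}}_t^{(s)}-\hat{\mathcal{L}}_t^{(s-1)}|\le L_s\varepsilon_s$, and summing over $s=1,\dots,t$ delivers $|\mathcal{L}_t-\hat{\mathcal{L}}_t^{\textit{IUPM}}|\le\sum_{s=1}^{t}L_s\varepsilon_s$. Intuitively the bound features $\varepsilon_s$ rather than $\Delta_s=(1+L_s)\varepsilon_s$ because the covariate marginal is never approximated — IUPM scores the observed test samples and OT transports marginals exactly — so only the conditional‑label mismatch, scaled by $L_s$, contributes.

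The step I expect to be the main obstacle is the bookkeeping in the last paragraph: carefully propagating the marginal law through the chain of conditional OT couplings and verifying that the pairwise marginal at step $s$ is precisely $\gamma_s$, so that the expected transport cost collapses to $\mathcal{W}(P(X_{s-1}),P(X_s))$. One should also make explicit that "$1$‑Lipschitz in its second argument" is meant with respect to the same ground metric on $\mathcal{Y}$ that underlies the conditional‑distribution Wasserstein distances in the assumption, so that Kantorovich--Rubinstein duality is applicable, and note the minor point that all distributions above are read as the empirical ones supported on the available samples (so that $\hat{\mathcal{L}}_t^{(t)}=\mathcal{L}_t$ exactly). The remaining ingredients — the telescoping identity, joint convexity of $\mathcal{W}_1$, and the Lipschitz/duality estimate — are routine.
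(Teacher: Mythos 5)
Your proof is correct, and it reaches the paper's bound by a slightly different route. The paper applies Kantorovich--Rubinstein duality once, comparing $P_t(Y_t\mid X_t=x_t)$ directly with $P_0(Y_0\mid X_0=x_0)$ under the composite coupling $\Psi_t$, and then splits the resulting conditional Wasserstein distance along the sampled path $x_t\to x_{t-1}\to\dots\to x_0$ via the triangle inequality before invoking the Lipschitz assumption step by step. You instead telescope at the level of the risk estimates, introducing intermediate estimators $\hat{\mathcal{L}}_t^{(s)}$ whose labels are transported back only to time $s$, and apply duality plus convexity of $\mathcal{W}_1$ once per step. After unfolding, both arguments reduce each step to the same quantity $\mathbb{E}_{(x_s,x_{s-1})\sim\gamma_s}\bigl[\mathcal{W}\bigl(P(Y_s\mid x_s),P(Y_{s-1}\mid x_{s-1})\bigr)\bigr]\le L_s\,\mathbb{E}_{\gamma_s}[c(x_s,x_{s-1})]=L_s\,\mathcal{W}(P(X_s),P(X_{s-1}))\le L_s\varepsilon_s$, so the mathematical content is identical. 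What your organization buys is cleaner bookkeeping: the paper's expectation $\mathbb{E}_{x_0\sim\Psi_t(X_0\mid X_t)}$ must implicitly be read as an expectation over the entire intermediate path for its triangle-inequality step to be meaningful, whereas your version makes the marginal propagation through the chain of conditional couplings explicit (and correctly identifies that the pairwise law at step $s$ is exactly $\gamma_s$, which is also the point the paper exploits to derive Corollary 1). Your two flagged caveats --- that the $1$-Lipschitz condition on $\mathcal{L}$ must refer to the same ground metric on $\mathcal{Y}$ underlying the conditional Wasserstein distances, and that all distributions are the empirical ones --- are exactly the assumptions the paper makes implicitly; and your closing observation explaining why the bound involves $L_s\varepsilon_s$ rather than $(1+L_s)\varepsilon_s$ is a correct and worthwhile addition. (Note the theorem statement's $\sum_{i=1}^t L_t\varepsilon_t$ is a typo for $\sum_{i=1}^t L_i\varepsilon_i$, which is what both proofs actually establish.)
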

Note that the worst-case performance estimation error grows only linearly in $t$ matching corresponding bounds on the adaptation error established in the gradual domain adaptation literature \citep{he2023gradual}.
This theorem also implies that relying explicitly on the incremental OT couplings to relate unlabeled samples to labeled ones is most effective:
\begin{corollary}
If each $\gamma_t$ is the optimal transport coupling between two consecutive domains $X_{t-1}$ and $X_t$, then this minimizes the estimation error across all possible incremental couplings. 
\end{corollary}
As a consequence, IUPM provides a theoretically grounded approach to estimating the performance of machine learning models during deployment facing gradual distribution shifts. The underlying proofs and further mathematical details for all theoretical statements made above are provided in \hyperref[app:A]{Appendix A}.

\paragraph{Uncertainty Estimation and Active Labeling Intervention}\label{sec:uncertainty_estimation}
Another implication of the Theorem above is that the estimation error might grow linearly in time. Therefore it would also be desirable to additionally quantify the uncertainty related to the assessment. In cases of low confidence estimates it is preferable to rather collect true labels under the current data distribution to verify the performance indication. Our approach also provides a simple way to achieve this as IUPM exhibits an intrinsic notion of uncertainty due to the incremental matching procedure. More specifically, the estimate $\hat{P}(Y_t|X_t)$ is an actual predictive distribution that also internalizes uncertainty for cases where linked samples have contradicting labels. Therefore, we can use it to quantify the uncertainty of the anticipated performance using the expected standard deviation (SD) of the sample-wise loss estimates:
\begin{align*}
	\mathcal{U}(\hat{\mathcal{L}}_t^{\textit{IUPM}}) = \mathbb{E}_{P(X_t)} \text{SD}_{\hat{P}(Y_t|X_t)}\left[\mathcal{L}(f(X_t), Y_t)\right]
\end{align*}
In addition to providing a means for users to consolidate their trust in the performance estimate, the quantified uncertainty
$\mathcal{U}(\hat{\mathcal{L}}_t^{\textit{IUPM}})$
can also be used to automatically trigger efficient relabeling when the uncertainty exceeds a significant threshold. We utilize this property by proposing a novel sampling strategy for active label intervention. With our Uncertainty Intervention (UI) strategy we aim to make the most efficient use of a limited labeling budget allowing us to label only $m$ samples. For this, we query ground truth labels for critical samples $x_t$ that contribute the largest to the uncertainty of the anticipated performance for the current step:
\begin{align*}
	\arg\text{top-}m_{x_t \in \Omega_t} \mathcal{U}\left[\mathcal{L}(f(x_t), Y_t)\right]
\end{align*} where $\arg\text{top-}m$ denotes the operator selecting the top $m$ elements maximizing the objective. 
The new labels are used to update $\Psi_t(X_0|X_t)$ such that $\hat{P}(Y_t|X_t=x_t)$ assigns a fixed label removing the accumulated uncertainty for sample $x_t$.
\section{EXPERIMENTS}
In this section, we evaluate our IUPM approach on three different gradual shift scenarios. First, to assess the general functionality of IUPM and to yield insights into our proposed uncertainty intervention sampling strategy, we present results based on synthetic examples with continuous shifts in two-dimensional space. Second, we analyze the capabilities of IUPM to monitor performance changes due to different gradual image perturbations on MNIST \citep{mnist} and a subset of ImageNet \citep{Howard_Imagenette_2019}. Finally, we apply IUPM to a real-world shift scenario based on yearbook portraits across several decades \citep{ginosar2015century} and demonstrate its superiority over several baselines. For this dataset, we also provide guidance on how to validate the underlying theoretical assumptions empirically.
The code for IUPM is made available\footnote{\texttt{https://github.com/alexanderkoebler/IUPM}}, the algorithm is detailed in \hyperref[app:B]{Appendix B} and the setup for each experiment is documented in \hyperref[app:C]{Appendix C}. 
\subsection{Evaluation Setup}
\paragraph{Performance Estimation Baselines}
Throughout the experiments, we compare our approach to several existing performance estimation methods. Those baselines consist of four confidence-based error estimation methods, as described in \citep{gargleveraging}. \textit{Average Confidence (AC)} simply estimates the prediction accuracy as the expectation of the confidence for the predicted class across the data set in step $t$ as:
\begin{equation*}
AC_{\Omega_t} = \mathbb{E}_{x\sim \Omega_t}[\max_{j\in \mathcal{Y}} f_j(x)]
\end{equation*}
The more sophisticated \textit{Difference Of Confidence (DOC)} \citep{guillory2021predicting} uses the discrepancy between the model confidence on the source and target data sets as an estimate of performance degradation. To obtain an approximation of the performance in step $t$, the degradation is subtracted from the performance on the initialization data set $t=0$ according to:
\begin{align*}
DOC_{\Omega_t} &= \mathbb{E}_{x, y\sim \Omega_0}[\arg\max_{j\in \mathcal{Y}} f_j(x) \neq y]\\ &+ \mathbb{E}_{x\sim \Omega_t}[\max_{j\in \mathcal{Y}} f_j(x)] - \mathbb{E}_{x\sim \Omega_0}[\max_{j\in \mathcal{Y}} f_j(x)]
\end{align*}
\textit{Average Threshold Confidence (ATC)} \citep{gargleveraging} learns a threshold for model confidence on the initialization data set $\Omega_0$ and estimates accuracy on the current set as the fraction of examples where model confidence exceeds this threshold. Lastly, we consider \textit{Importance re-weighting (IM)}, as proposed by \citep{chen2021mandoline}. We also evaluate the direct mapping $\gamma(X_0|X_t)$ for label transport and performance estimation inspired by \citep{decker2024explanatory} and call it \textit{Non-Incremental Performance Estimation (NIPM)}. Across all experiments, we consider the model accuracy as the loss criterion $\mathcal{L}$ to evaluate performance.

\paragraph{Sampling Strategies}
Throughout all active label intervention experiments, we utilize a fixed threshold for our uncertainty indicator of $\mathcal{U}(\mathcal{L}_t) > 0.1$ to trigger an intervention step. We provide an ablation study detailing the rationale for choosing this threshold value in \hyperref[app:D]{Appendix D}.
To evaluate the efficacy of our introduced \textit{Uncertainty Intervention (UI)} sampling strategy introduced in Section \ref{sec:uncertainty_estimation}, we compare two baseline methods.
First, we draw from the active testing literature and utilize an active sampling strategy introduced by \citep{kossen2021active}, which selects samples based on their expected loss contribution under the performance estimate measured based on the cross-entropy loss:
\begin{align*}
	\arg\text{top-}m_{x_t \in \Omega_t} -\sum_y \hat{P}(y_t|X_t=x_t)\log f(x_t)_y
\end{align*}
We refer to this sampling strategy as \textit{Cross Entropy Intervention (CEI)}.
Note that compared to \citep{kossen2021active}, we only trigger the labeling procedure once our uncertainty threshold is exceeded and not for every performance calculation directly using the $m$ generated labels.
Lastly, we introduce \textit{Random Intervention (RI)} as a naïve but effective baseline, where we randomly sample $m$ samples from the available set $\Omega_t$ at time step $t$.

\subsection{Experimental Results}
\paragraph{Translation and Rotation in Input Space}
In our first experiment, we use three two-dimensional toy data sets (Figure \ref{fig:syn_shifts}) provided by \citep{scikit-learn}. For all three datasets, we train a Random Forest (RF), XGBoost (XGB), and a Multilayer Perceptron (MLP) classifier in the initial source distribution. 
\begin{figure}[htb!]
	\centering
 \includegraphics[width=0.42\textwidth]{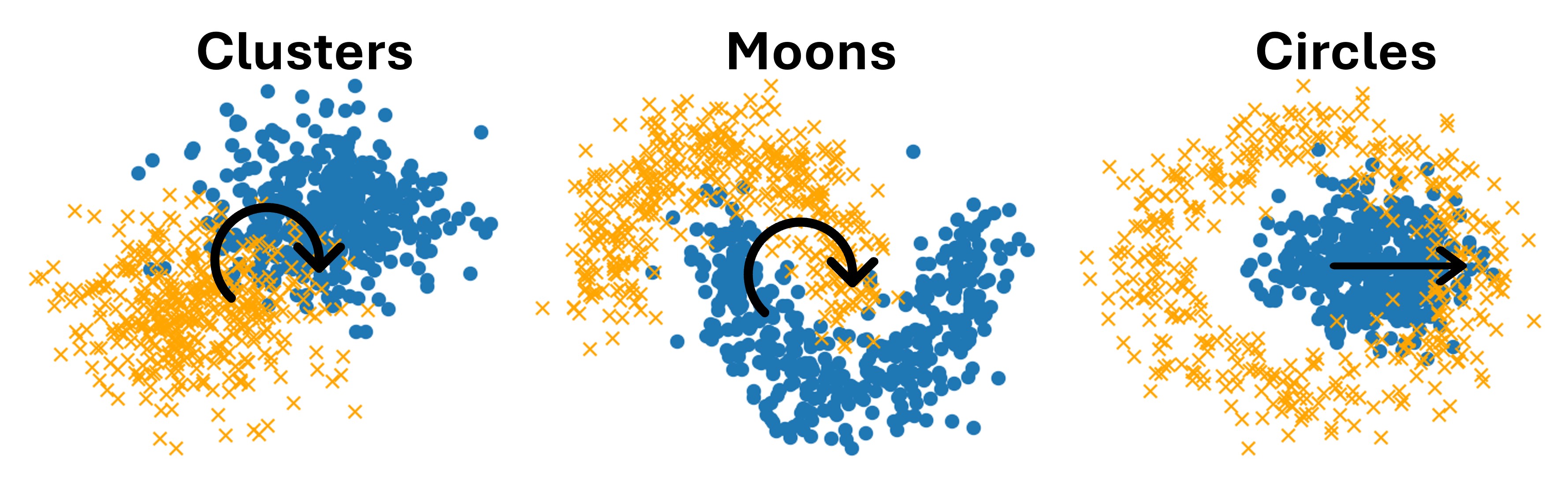}
	\caption{Synthetic two-dimensional toy datasets and corresponding shifts indicated by black arrows.}
	\label{fig:syn_shifts}
\end{figure}
After training at $t=0$, all data sets are shifted for $t=1, \dots, 100$ steps to simulate gradual changes over time. For the "Clusters" and "Moons" data sets, this shift results from rotating both classes by 2° per step. The "Circles" dataset experiences a translation shift by $0.02$ in the x-direction only on the inner circle class.\\
By showing the performance estimation over time for monitoring an MLP model on the moons data set facing a gradual rotation, we exemplarily illustrate in Figure \ref{fig:moons_performance} that IUPM best estimates the actual performance degradation.
\begin{figure}[htb!]
	\centering
 \includegraphics[width=0.45\textwidth]{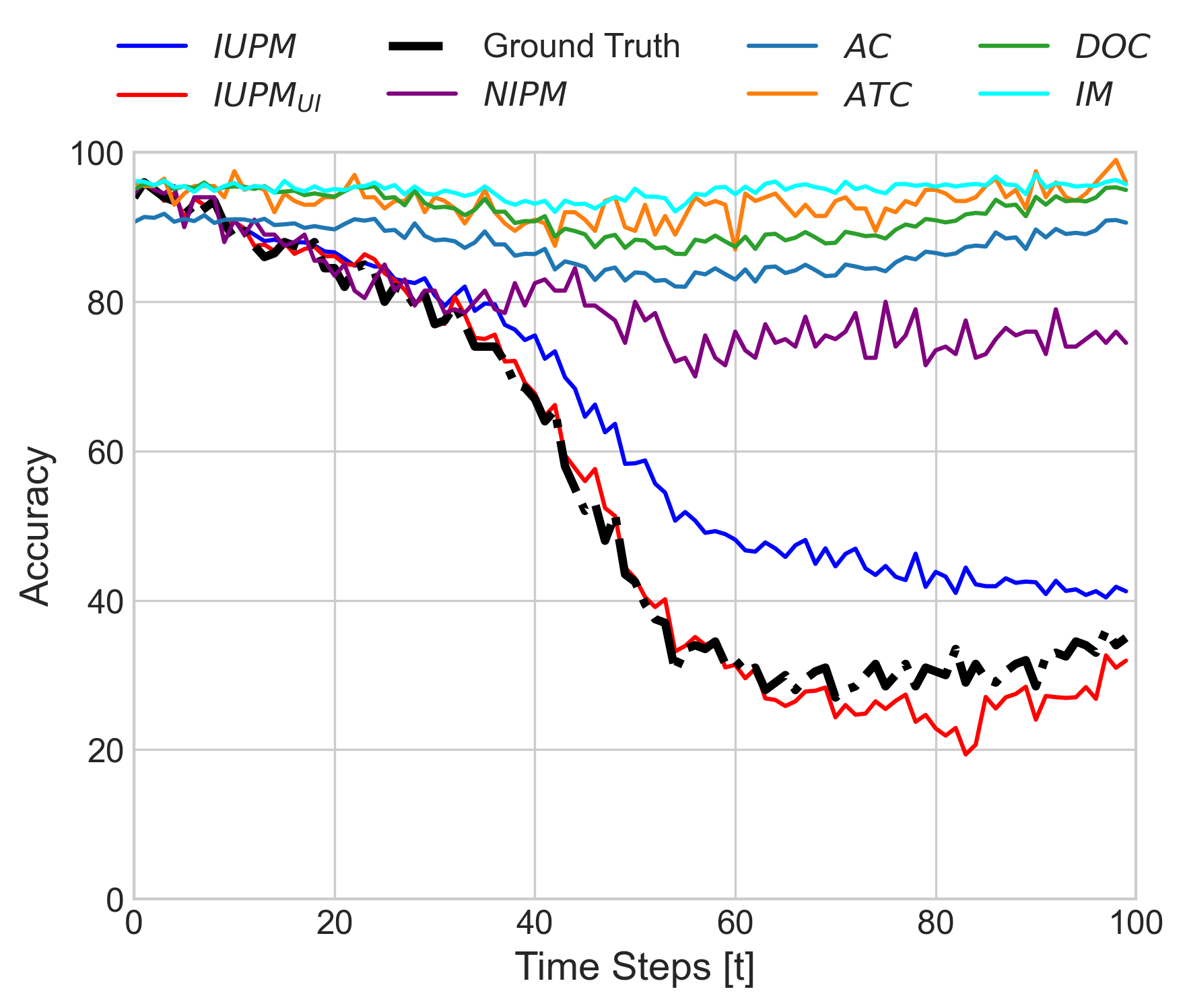}
	\caption{Performance estimation for an MLP model on the synthetic moons data set for a rotational shift over 100 steps resulting in a total rotation of 200°. Our proposed IUPM approach with and without label intervention clearly yields the highest fidelity for the performance estimation.}
	\label{fig:moons_performance}
	\vspace{-0.4cm}
\end{figure}
While initially, the non-incremental NIPM approach can also approximate the real performance, it deviates strongly as soon as the degree of degradation caused by the shift accelerates. While in early steps, it is still possible to directly match source samples $x_0$ to target samples $x_t$ sharing the same class, this becomes increasingly difficult as with the rotation, the conditional distributions $P_0(Y_0|X_0)$ and $P_t(Y_t|X_t)$ diverge. The confidence-based approaches fail to sufficiently pick up the gradual performance degradation. These observations are consistent across three different models and data sets (Table \ref{tab:syn_shifts}). As shown in the previous figure and table, the error of the accuracy estimate by IUPM can further be significantly reduced by introducing active label intervention steps. For this, we compare our proposed UI approach with CEI and RI. In all cases, we relabel $50\%$ of $\Omega_t$ when the total uncertainty exceeds a predefined
threshold.
As described above, this threshold is set to 10\% in accuracy deviation. Due to its empirical success, we used the same threshold across all subsequent experiments.
As shown in Figure \ref{fig:moons_intervention}, whilst all sampling methods can keep the uncertainty below the predefined threshold, our proposed UI approach requires far fewer intervention steps and, thus, labeling effort. This is underlined by Table \ref{tab:sampling_methods} showing that UI, in most cases, outperforms the other approaches concerning the error in the performance estimation while consistently requiring far fewer intervention steps. The superiority of UI in terms of intervention efficiency is further confirmed by an ablation study in \hyperref[app:D]{Appendix D}.
\begin{figure}[htb!]
	\centering
	\includegraphics[width=0.48\textwidth]{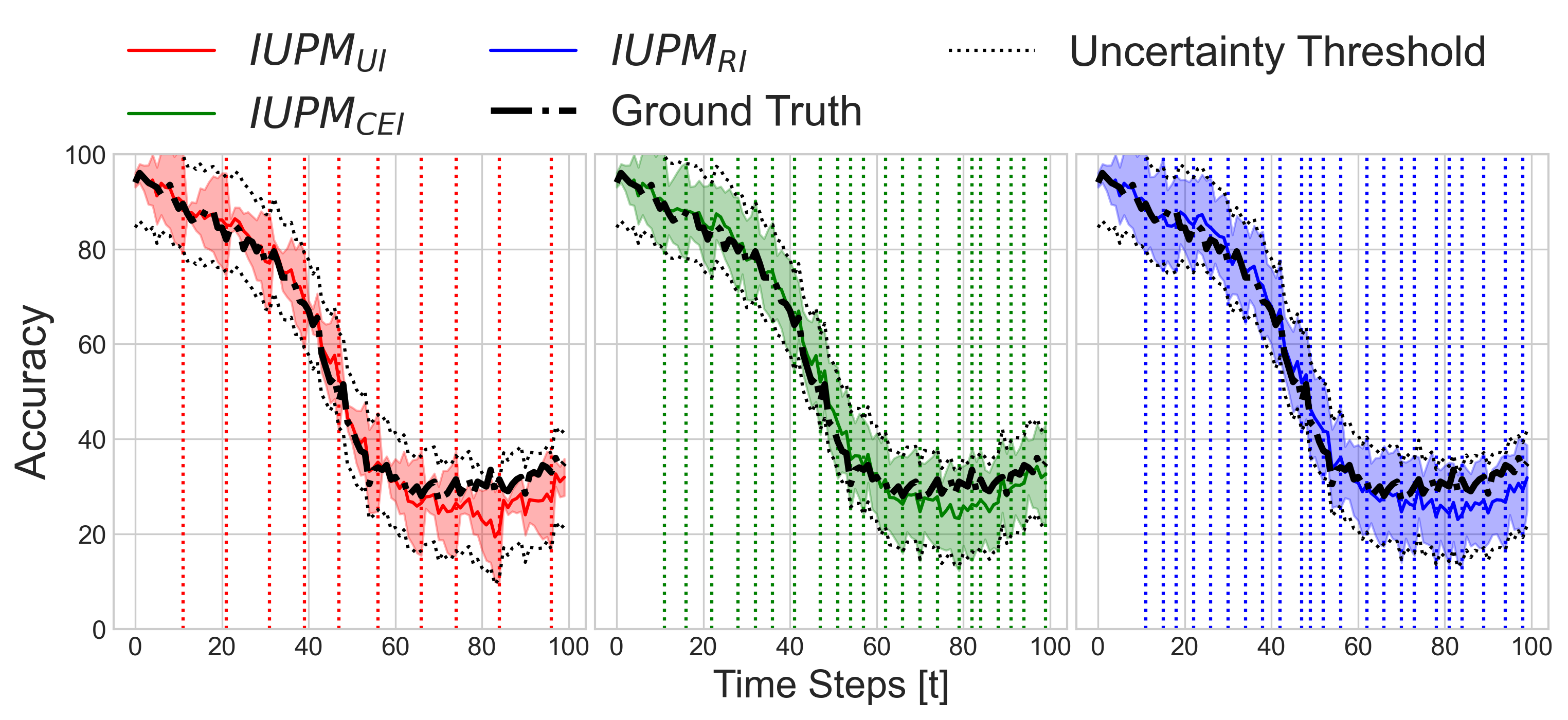}
	\caption{Comparison of sampling strategy using Active Label Intervention on moons data set over 100 steps. All intervention strategies allow keeping the uncertainty below the predefined threshold, however, our proposed Uncertainty Intervention (UI) requires far fewer intervention steps.}
	\label{fig:moons_intervention}
	\vspace{-0.2cm}
\end{figure}
\begin{figure}[htb!]
	\centering
	\includegraphics[width=0.48\textwidth]{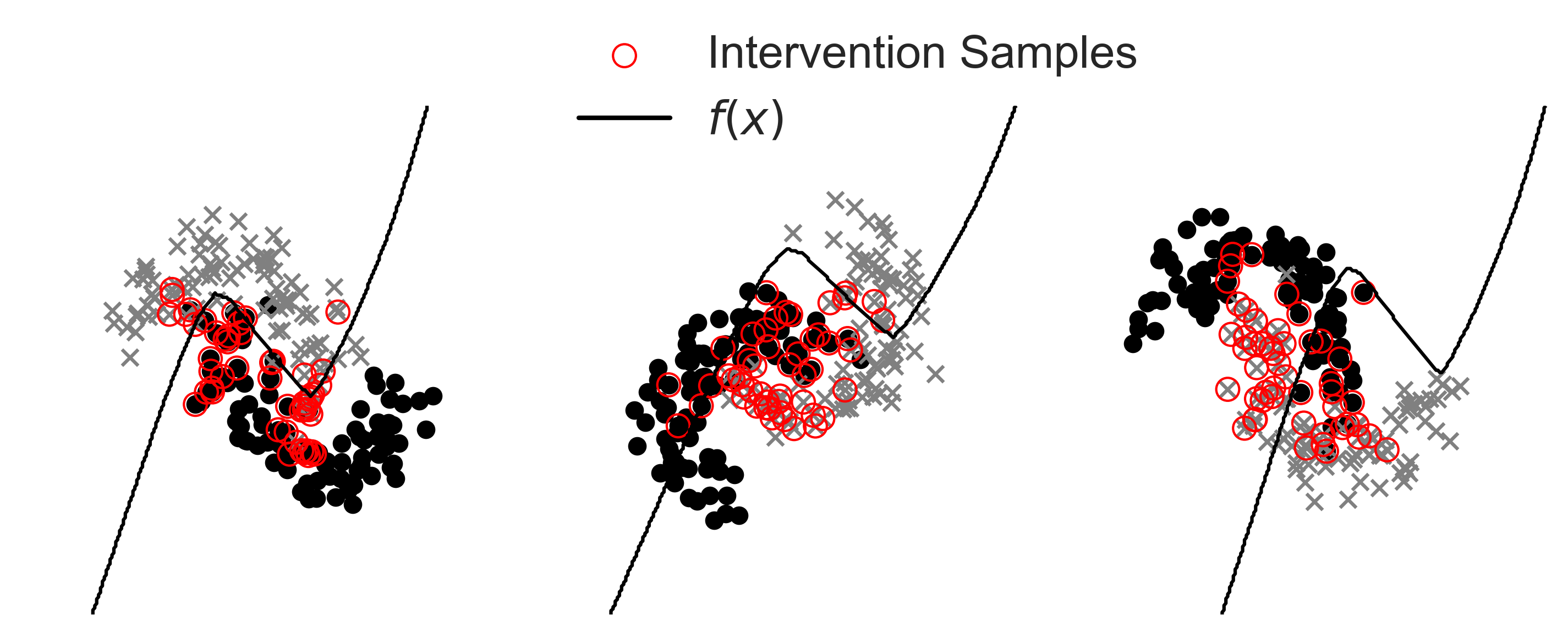}
	\caption{Visualization of 50 samples drawn using our proposed Uncertainty Intervention (UI) sampling strategy before, in the middle, and at the end of 100 time steps. The method consistently samples data points along the decision boundary inherently defined by the moons data set.}
	\label{fig:moons_sampling}
    \vspace{-0.5cm}
\end{figure}
Figure \ref{fig:moons_sampling} provides another intuitive illustration of the strength of the proposed sampling strategy. UI selects the most relevant examples that are close to the inherent decision boundary of the dataset.
\begin{table*}[htb!]
\caption{Mean Average Error (MAE) between ground truth and estimated accuracy using baseline methods and IUPM across three synthetic data sets and three different models. The table shows the mean across five random seeds, we refer to \hyperref[app:D]{Appendix D} for confidence intervals.}
\centering
\label{tab:syn_shifts}
\begin{tabular}{@{}cccccccccc@{}}
\toprule
\multirow{2}{*}{\textbf{Method}} &
\multicolumn{3}{c}{\textbf{Clusters}} &
\multicolumn{3}{c}{\textbf{Moons}} &
\multicolumn{3}{c}{\textbf{Circles}} \\ \cmidrule(l){2-10} 
& \textbf{RF} & \textbf{XGB} & \textbf{MLP} & \textbf{RF} & \textbf{XGB} & \textbf{MLP} & \textbf{RF} & \textbf{XGB} & \textbf{MLP} \\ \midrule
\textbf{ATC}  & 0.4413       & 0.4664        & 0.4348        & 0.3788       & 0.3594        & 0.3679        & 0.3514       & 0.3442        & 0.3531        \\ \midrule
\textbf{AC}   & 0.4313       & 0.4875        & 0.4402        & 0.3243       & 0.3741        & 0.3529        & 0.2760       & 0.3325        & 0.3240        \\ \midrule
\textbf{DOC}  & 0.4360       & 0.4527        & 0.4488        & 0.3632       & 0.3493        & 0.3701        & 0.3574       & 0.3418        & 0.3473        \\ \midrule
\textbf{IM}   & 0.4525       & 0.4662        & 0.4705        & 0.3955       & 0.3599        & 0.3800        & 0.3559       & 0.3431        & 0.3488        \\ \midrule
\textbf{NIPM} & 0.4225       & 0.4186        & 0.4673        & 0.2482       & 0.2286        & 0.2319        & 0.0775       & 0.0791        & 0.0742        \\ \midrule
\textbf{IUPM} & {\ul 0.2914} & {\ul 0.2894}  & {\ul 0.3035}  & {\ul 0.0781} & {\ul 0.0793}  & {\ul 0.1020}  & {\ul 0.0352} & {\ul 0.0390}  & {\ul 0.0359}  \\ \midrule
\textbf{$\text{IUPM}_{\textit{UI}}$} &
\textbf{0.0322} &
\textbf{0.0307} &
\textbf{0.0331} &
\textbf{0.0250} &
\textbf{0.0250} &
\textbf{0.0230} &
\textbf{0.0136} &
\textbf{0.0144} &
\textbf{0.0138} \\ \bottomrule
\end{tabular}
\end{table*}

\begin{table*}[htb!]
\centering
\caption{Mean Average Error (MAE) between ground truth and estimated accuracy and number of triggered label interventions ($n_I$) for Random Intervention (RI), Cross Entropy Intervention (CEI), and our proposed Uncertainty Intervention (UI) across three synthetic data sets and three different models.}
\label{tab:sampling_methods}
\resizebox{0.9\textwidth}{!}{\begin{tabular}{@{}ccccccccccc@{}}
\toprule
\multicolumn{2}{c}{\multirow{2}{*}{\textbf{Method}}} &
  \multicolumn{3}{c}{\textbf{Clusters}} &
  \multicolumn{3}{c}{\textbf{Moons}} &
  \multicolumn{3}{c}{\textbf{Circles}} \\ \cmidrule(l){3-11} 
\multicolumn{2}{c}{} &
  \textbf{RF} &
  \textbf{XGB} &
  \textbf{MLP} &
  \textbf{RF} &
  \textbf{XGB} &
  \textbf{MLP} &
  \textbf{RF} &
  \textbf{XGB} &
  \textbf{MLP} \\ \midrule
\multirow{2}{*}{\textbf{$\text{IUPM}_{\textit{RI}}$}} &
  \textbf{MAE} &
  0.0336 &
  0.0327 &
  0.0296 &
  0.0256 &
  \textbf{0.0234} &
  \textbf{0.0198} &
  \textbf{0.0160} &
  0.0177 &
  0.0176 \\
 &
  \textbf{$\bold {n_I}$} &
  32 &
  35 &
  34 &
  23 &
  22 &
  23 &
  31 &
  30 &
  31 \\ \midrule
\multirow{2}{*}{\textbf{$\text{IUPM}_{\textit{CEI}}$}} &
  \textbf{MAE} &
  0.0432 &
  0.0397 &
  0.0456 &
  \textbf{0.0216} &
  0.0258 &
  0.0219 &
  0.0187 &
  0.0206 &
  0.0234 \\
 &
  \textbf{$\bold {n_I}$} &
  35 &
  34 &
  34 &
  22 &
  21 &
  19 &
  29 &
  27 &
  31 \\ \midrule
\multirow{2}{*}{\textbf{$\text{IUPM}_{\textit{UI}}$}} &
  \textbf{MAE} &
  \textbf{0.0270} &
  \textbf{0.0272} &
  \textbf{0.0265} &
  0.0244 &
  0.0242 &
  0.0222 &
  \textbf{0.0160} &
  \textbf{0.0157} &
  \textbf{0.0158} \\
 &
  \textbf{$\bold {n_I}$} &
  \textbf{15} &
  \textbf{15} &
  \textbf{15} &
  \textbf{10} &
  \textbf{10} &
  \textbf{10} &
  \textbf{13} &
  \textbf{13} &
  \textbf{13} \\ \bottomrule
\end{tabular}}
\end{table*}

\begin{figure*}[htb!]
    \centering
    \begin{minipage}[c]{0.31\textwidth}
        \centering
        \resizebox{\textwidth}{!}{%
            \begin{tabular}{@{}cccc@{}}
            \toprule
            \multirow{2}{*}{\textbf{Method}}   & \multirow{2}{*}{\textbf{Rotation}} & \multirow{2}{*}{\textbf{Scaling}} & \multirow{2}{*}{\textbf{Translation}} \\
            &             &             &             \\ \midrule
            \textbf{ATC}  & 0.4836       & 0.1763       & 0.3111       \\ \midrule
            \textbf{AC}   & 0.4931       & 0.2292       & 0.3842       \\ \midrule
            \textbf{DOC}  & 0.5181       & 0.2538       & 0.4090       \\ \midrule
            \textbf{IM}   & 0.6282       & 0.6166       & 0.5686       \\ \midrule
            \textbf{NIPM}  & 0.2187       & 0.0676       & 0.3110       \\ \midrule
            \textbf{IUPM} & {\ul 0.0985} & {\ul 0.0442} & {\ul 0.1263} \\ \midrule
            \textbf{$\text{IUPM}_{\textit{UI}}$} & \textbf{0.0719} & \textbf{0.0438} & \textbf{0.0777} \\ \bottomrule
            \end{tabular}
        }
        \captionof{table}{Mean Average Error (MAE) between ground truth and estimated accuracy for a LeNet across three different shifts on the MNIST data set. The table shows the mean across five random seeds, we refer to \hyperref[app:D]{Appendix D} for confidence intervals.}
        \label{tab:mnist_shifts}
    \end{minipage}
    \hspace{0.01\textwidth}
    \begin{minipage}[c]{0.62\textwidth}

        \begin{subfigure}{.5\textwidth}
        \centering
        \includegraphics[width=\linewidth]{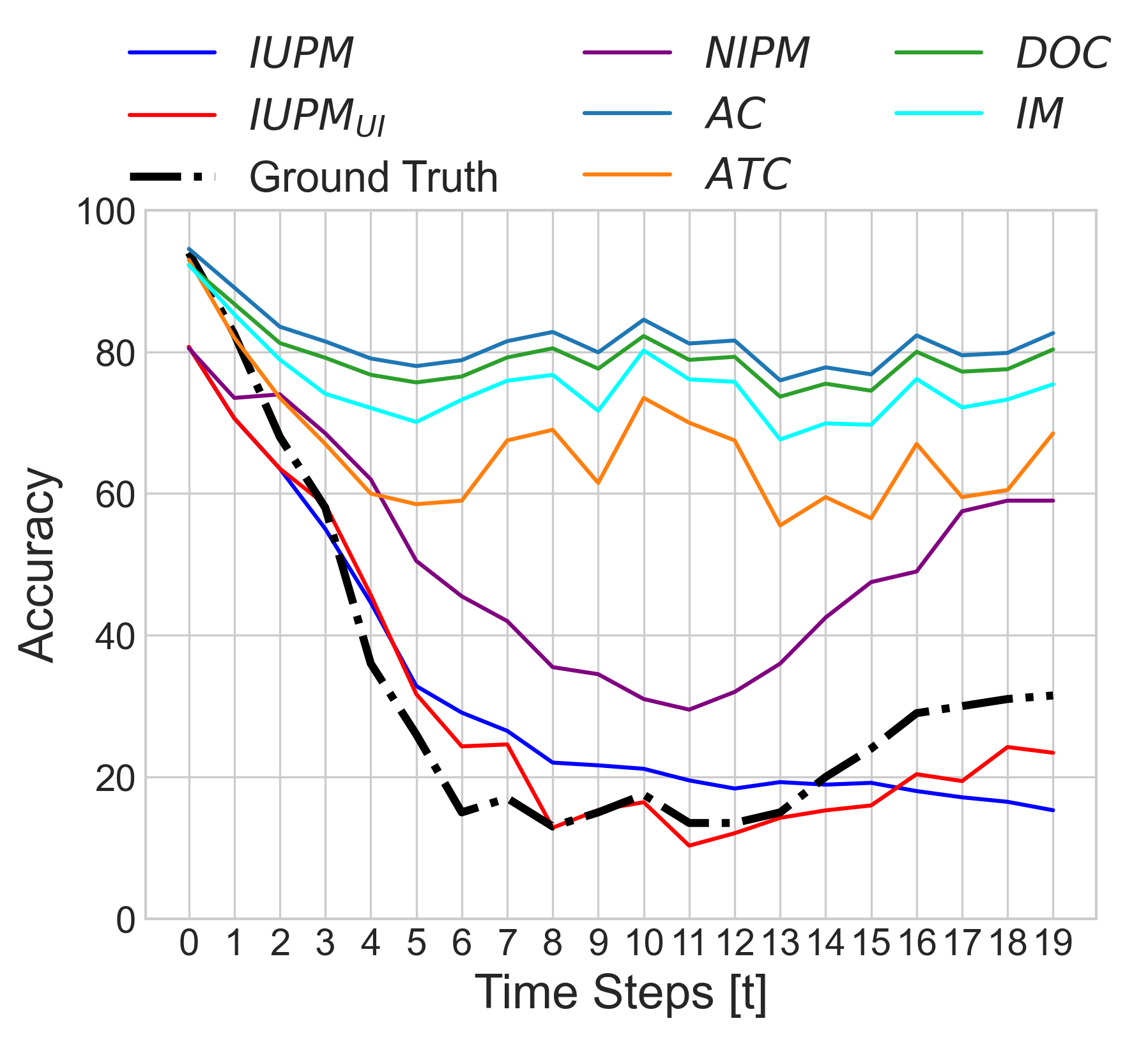}
        \end{subfigure}
        \begin{subfigure}{.5\textwidth}
        \centering
        \includegraphics[width=\linewidth]{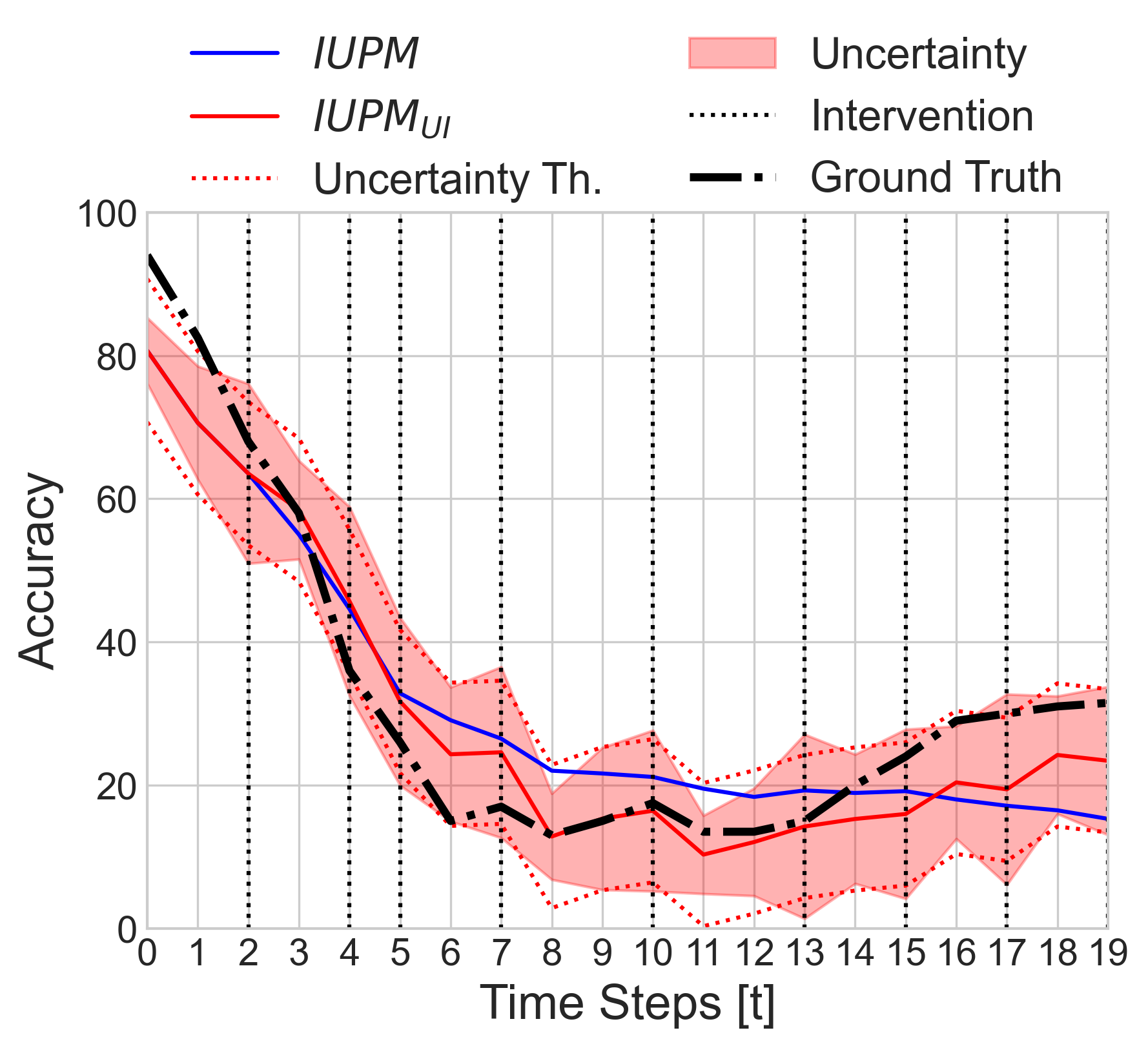}
        \end{subfigure}
        \caption{Performance estimation for a rotational shift on the MNIST digits accumulating to a 180° rotation after 20 steps. Comparing IUPM to the different baselines in (Left) illustrates that it offers the highest fidelity for the estimation. (Right) shows the benefit of including Uncertainty Intervention (UI). The intervention steps triggered by exceeding the threshold are indicated as dashed vertical lines.}
        \label{fig:mnist}
    \end{minipage}
\end{figure*}

\paragraph{Monitoring Performance Degradation due to Image Perturbations}
To make a step towards assessing more complex shifts, we first monitor a model classifying handwritten digits \citep{mnist} that experience a shift caused by affine transformations, i.e., rotation, used in a related context in \citep{wang2020continuously}, translation, and scaling of the digits. For this task, we trained a LeNet \citep{lecun1998gradient} and performed the matching based on representations of the second classification layer of the model, which is a common approach to apply optimal transport to high dimensional data \citep{courty2016optimal, he2023gradual}.
All shifts are evaluated for $20$ steps where we draw a distinct set of $200$ samples from the test set in every step. Figure \ref{fig:mnist} illustrates the performance estimation for a 180° rotation. Even without intervention, IUPM can estimate the performance best. The estimation by NIPM and the other baseline methods can only follow the ground truth sufficiently for a few time steps. The performance estimation error can be further reduced by allowing label intervention with the same settings as in the previous experiment (Figure \ref{fig:mnist}). To keep the uncertainty below the predefined threshold, more interventions are needed in early steps where the shift more strongly impacts the performance.
The observations are further supported by the results on two additional shifts in Table \ref{tab:mnist_shifts}.
To substantiate the findings on MNIST, we have additionally validated our approach on a ResNet-50 \citep{he2016deep} classifying 500 samples from the Imagnette validation dataset comprising 10 classes of ImageNet. We analyze several shifts from \texttt{ImageNet-c} \citep{hendrycks2018benchmarking} that can be considered gradual and further interpolate them over 20 steps. Again, IUPM gives on average the best label-free performance estimate across all considered methods, while only falling short on two shifts. Moreover, relying on our Uncertainty-based labeling interventions ($\text{IUPM}_{\textit{UI}}$) the estimate can further be improved by around $10\%$.
\begin{table*}[h!]
\centering
\begin{tabular}{lccccccc|c}
\toprule
    \textbf{Method}   & \textbf{Blur} & \textbf{Contrast} & \textbf{Brightness} & \textbf{Rotation} & \textbf{Scale} & \textbf{Shear} & \textbf{Translate} & \textbf{Mean }\\
\midrule
\textbf{ATC}     & 0.0719 & 0.0562 & 0.0260 & \textbf{0.0564} & 0.0871 & 0.0825 & 0.0787 & 0.0655 \\
\midrule
\textbf{AC}      & 0.0617 & 0.0825 & 0.0551 & 0.1017 & 0.1346 & 0.1695 & 0.0569 & 0.0946 \\
\midrule
\textbf{DOC}     & 0.0608 & 0.0827 & 0.0574 & 0.1055 & 0.1349 & 0.1700 & 0.0565 & 0.0954 \\
\midrule
\textbf{IM}      & \textbf{0.0456} & 0.0936 & 0.0620 & 0.1320 & 0.1166 & 0.1737 & 0.0500 & 0.0962 \\
\midrule
\textbf{NIPM}    & 0.1094 & 0.0805 & 0.0697 & 0.1117 & 0.2127 & 0.2327 & 0.1254 & 0.1346 \\
\midrule
\textbf{IUPM}    & 0.0610 & \textbf{0.0377} & \underline{0.0208} & 0.1024 & \underline{0.0766} & \underline{0.0341} & \underline{0.0469} & \underline{0.0542} \\
\midrule
\textbf{$\text{IUPM}_{\textit{UI}}$} & \underline{0.0569} & \underline {0.0417} & \textbf{0.0173} & \underline{0.0712} & \textbf{0.0752} & \textbf{0.0339} & \textbf{0.0469} & \textbf{0.0490} \\
\bottomrule
\end{tabular}
\caption{Mean Average Errors (MAE) between ground truth and estimated accuracy for a ResNet-50 across different gradual shifts from \texttt{ImageNet-c}. The results indicate that IUPM gives on average the best label-free performance estimate which can be further improved using our labeling interventions $\text{IUPM}_{\textit{UI}}$.}
\label{tab:imagenet}
\end{table*}
\paragraph{Monitoring Performance Degradation due to Real-World Temporal Shifts}
In this experiment, we show the applicability of our approach in a real-world shift scenario. For this, we utilize a gender classification data set consisting of yearbook portraits \citep{ginosar2015century} across decades, which is commonly used to evaluate methods with respect to gradual shifts \citep{yao2022wild, kumar2020understanding, he2023gradual}.
We trained a simple convolutional network on the available portraits from 1930 to 1934.
By using the samples from 1935 to initialize both our IUPM approach and the baseline methods, we evaluate the performance decline from 1936 to 2013. Unlike previous experiments on synthetic shifts presenting a continuous decline in model performance, Figure \ref{fig:portraits} shows that the ground truth performance of the model remains almost constant until 1966, when a severe dip in the model performance occurs which partly recovers until 1996.
\begin{figure*}
\centering
\begin{subfigure}{.48\textwidth}
  \centering
  \includegraphics[width=\linewidth]{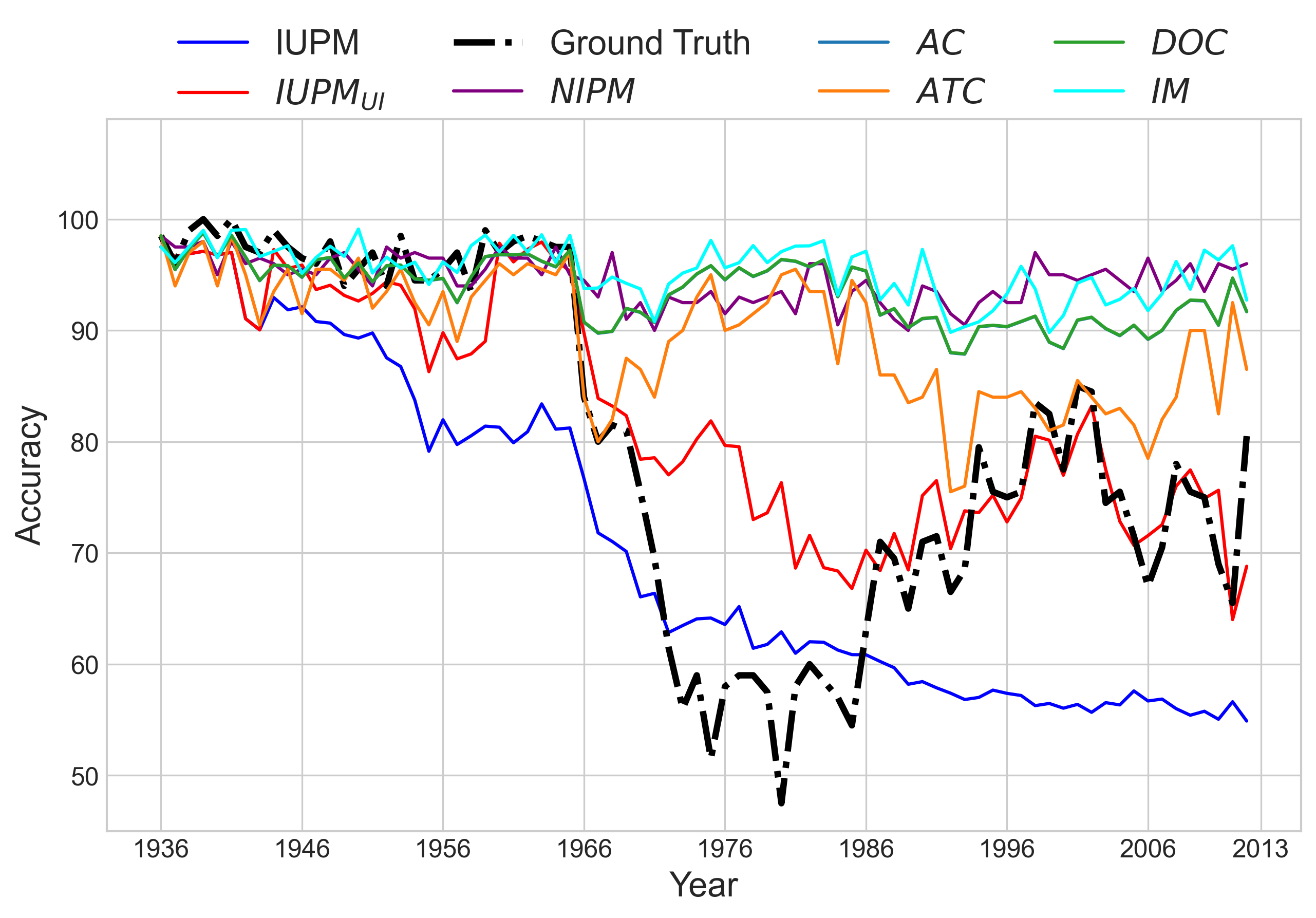}
\end{subfigure}
\begin{subfigure}{.48\textwidth}
  \centering
  \includegraphics[width=\linewidth]{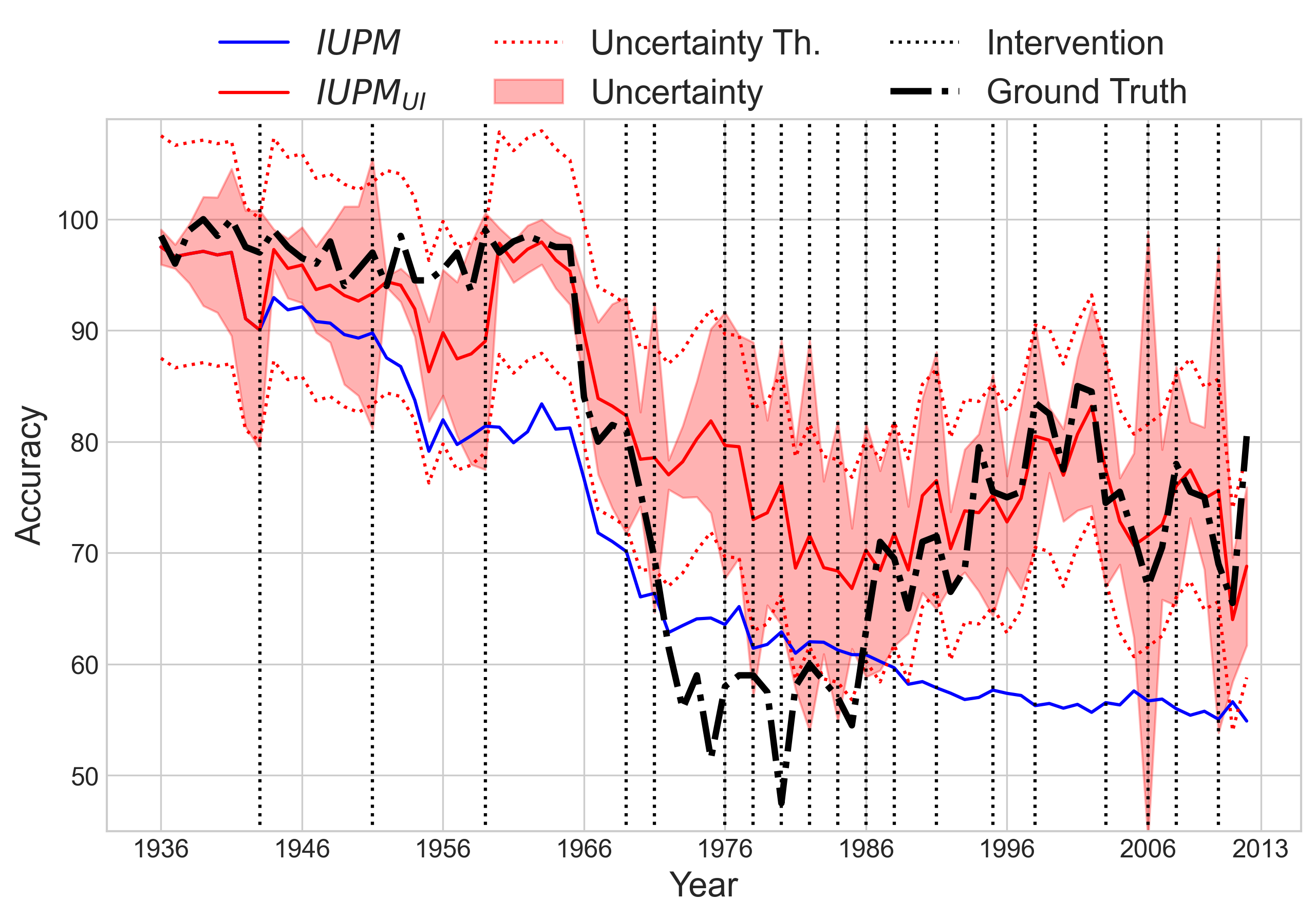}
\end{subfigure}
\caption{Performance estimation for yearbook model across portraits from 1936 to 2013. The comparison with baselines in (Left) shows that only the estimation by IUPM is consistent with the dip in performance between 1966 and 1996. (Right) illustrates the benefit of including Uncertainty Intervention (UI). The intervention steps triggered by exceeding the threshold are indicated as dashed vertical lines.}
\label{fig:portraits}
\end{figure*}
For complex real-world shifts such as the appearance of yearbook photos, the decline in model performance might be linked to various reasons. One such factor fitting very well to the observed performance degradation is quite evident in Figure \ref{fig:fig1}, showing that the hairstyle of male students around 1980 developed to be quite similar to a typical female hairstyle around 1930, which is considered in the models training set. IUPM is the only method that correctly captures this performance drop. Through the introduction of active label intervention steps, our approach is also capable of following a subsequent increase in the model's performance (Figure \ref{fig:portraits}). Further, our method correctly identifies areas with substantial changes and resulting high uncertainties in the performance estimation increasing the frequency of triggering label interventions. In real-world applications, it is highly desirable to identify such time periods to increase the labeling effort rather than relying on a non-faithful estimate.
\paragraph{Empirical Estimation of Gradual Lipschitz Smoothness}
As a final experiment, we demonstrate how to empirically asses if the underlying assumption of a shift being gradually Lipschitz smooth (Definition 2) holds practice. Again, we consider the portraits classification dataset comprising real college portraits, and we use only a small number of 100 samples per step. To get an estimate $\hat{\varepsilon}$ of the Wasserstein distance $\mathcal{W}$ between two input distributions at consecutive time steps, we solve the underlying transport problem using a linear program solver \citep{peyre2019computational}. To scale this approach to high-dimensional real datasets, we quantify this distributional distance based on corresponding network activations of the penultimate layer instead of the raw input, which is a common practice to compare realistic data distribution \citep{heusel2017gans,zhang2018unreasonable}. 
Moreover, since every input image has an unambiguous class label, $\mathcal{W}(P(Y_t|X_t), P(Y_{t-1}|X_{t-1})) = 1$ for samples that have different labels and 0 otherwise. Hence, one can verify the Lipschitz property at time $t$ by identifying the pair $(x_t^*, x_{t-1}^*)$ with different labels but minimal distance within the available samples:
\begin{align*}
(x_t^*, x_{t-1}^*) = \arg \min_{x_t , x_{t-1}} c(x_t, x_{t-1}) \quad \text{st.} \quad y_t \ne y_{t-1}.
\end{align*}
This results in:
\begin{align*}
\dfrac{\mathcal{W}(P(Y_t|X_t), P(Y_{t-1}|X_{t-1})) }{ c(x_t, x_{t-1}) }  \le \dfrac{1}{c(x_t^*, x_{t-1}^*)} \le L_t.
\end{align*}
Therefore, we can use $\hat{L}_t:= 1/c(x_t^*, x_{t-1}^*) $ as an empirical lower bound for Lipschitzness at time $t$. To validate the utility of such estimates, we performed two complementary correlation studies analyzing two theoretical relationships. First, Proposition 1 shows that the constants $L_t$ and $\varepsilon_t$ determine the strength of the overall shift through $(1+L_t)\varepsilon_t$. If our estimates are meaningful, this quantity should correlate with the actual performance $\mathcal{L}_t$. Second, Theorem 1 provides an upper bound on the performance estimation error $|\mathcal{L}_t- \hat{\mathcal{L}}_t^{\textit{IUPM}} |$ in terms of these constants. Hence, we would expect a correspondence between this theoretical bound and the actual error if our empirical estimates capture the relevant properties of the observed shifts.
\begin{table}[htb!]
\centering
\caption{Pearson Correlation ($\rho$) between performance ($\mathcal{L}_t$) and estimation error ($|\mathcal{L}_t- \hat{\mathcal{L}}_t^{\textit{IUPM}} |$) at time $t$ with corresponding theoretical upper bounds from Proposition 1 and Theorem 1. The strong linear correlation implies that estimating the underlying quantities can yield a useful indication of the extent to which an observed shift is gradually Lipschitz smooth.}
\label{tab:correlation}
\begin{tabular}{lcc}
\toprule
Quantity vs Proxy Estimate & $\rho$ & p-value \\
\midrule
\hspace{1.6cm}$\mathcal{L}_t$ vs. $(1+\hat{L}_t)\hat{\varepsilon_t}$ & 0.86 & $< 0.001$ \\
$|\mathcal{L}_t- \hat{\mathcal{L}}_t^{\textit{IUPM}} |$  vs. $\sum_{i=0}^t\hat{L}_i\hat{\varepsilon_t}$ & 0.56 & $< 0.001$ \\
\bottomrule
\end{tabular}
\end{table}
Both results indicate a high linear correlation with strong statistical significance between the estimated quantities and two related theoretical expressions. This provides evidence that estimating the underlying quantities can yield a useful indication if an observed shift is gradually Lipschitz smooth.
\begin{figure}[htb!]
    \centering
    \includegraphics[width=1\linewidth]{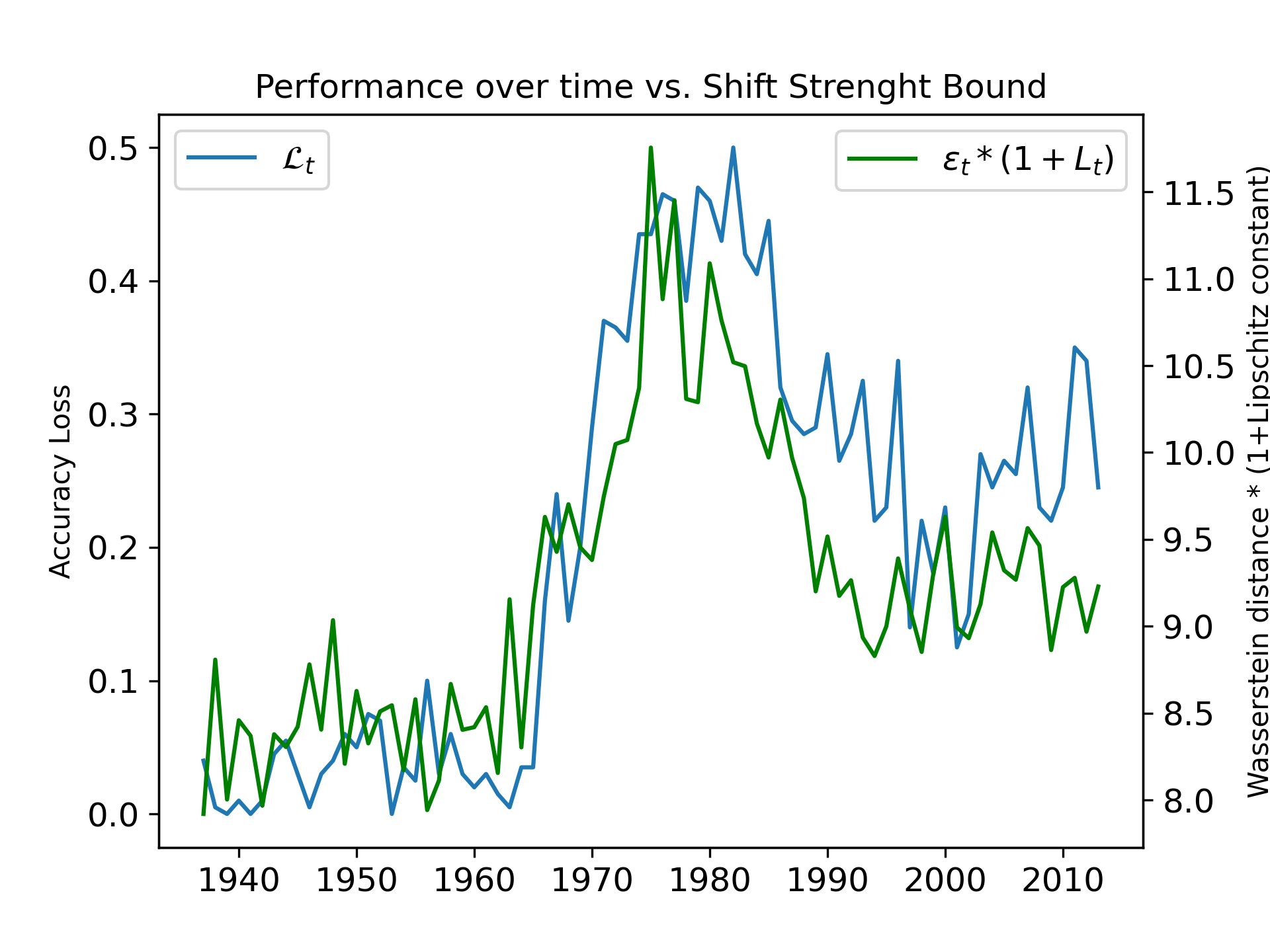}
    \caption{Performance ($\mathcal{L}_t$) and estimated overall shift strength ($(1+\hat{L}_t)\hat{\varepsilon}_t$) over time $t$.
    }
   \vspace{-0.4cm}
\end{figure}
\section{CONCLUSION}
In this work, we introduce IUPM, a novel method designed to anticipate the performance of deployed machine learning models under gradual changes over time. We theoretically analyze the underlying assumptions and demonstrate that its estimates closely align with true performance for a broad class of gradual distribution shifts. Additionally, IUPM naturally quantifies uncertainty, enabling more reliable assessments by actively querying for labels to enhance the trustworthiness of its estimates when necessary. Through analysis of simple synthetic datasets, we illustrate the underlying principles. We further validate its effectiveness on both simulated and real gradual and temporal shifts. While IUPM is specifically tailored for gradual changes, it may not be optimal for other types of distribution shifts. When these distribution shifts result in a high uncertainty of the sample-wise loss estimates, our UI approach may require frequent interventions. 
In many real-world cases, however, providing additional labeled data more frequently is preferable rather than relying on an inaccurate performance estimate without any indication of uncertainty at all. 

\bibliographystyle{apalike}
\bibliography{bibliography}
 \section*{Checklist}

 \begin{enumerate}

 \item For all models and algorithms presented, check if you include:
 \begin{enumerate}
   \item A clear description of the mathematical setting, assumptions, algorithm, and/or model. [Yes]
   \item An analysis of the properties and complexity (time, space, sample size) of any algorithm. [Yes]
   \item (Optional) Anonymized source code, with specification of all dependencies, including external libraries. [Yes, we include a GitHub link]
 \end{enumerate}

 \item For any theoretical claim, check if you include:
 \begin{enumerate}
   \item Statements of the full set of assumptions of all theoretical results. [Yes]
   \item Complete proofs of all theoretical results. [Yes]
   \item Clear explanations of any assumptions. [Yes]     
 \end{enumerate}

 \item For all figures and tables that present empirical results, check if you include:
 \begin{enumerate}
   \item The code, data, and instructions needed to reproduce the main experimental results (either in the supplemental material or as a URL). [Yes]
   \item All the training details (e.g., data splits, hyperparameters, how they were chosen). [Yes]
         \item A clear definition of the specific measure or statistics and error bars (e.g., with respect to the random seed after running experiments multiple times). [Yes]
         \item A description of the computing infrastructure used. (e.g., type of GPUs, internal cluster, or cloud provider). [Yes]
 \end{enumerate}

 \item If you are using existing assets (e.g., code, data, models) or curating/releasing new assets, check if you include:
 \begin{enumerate}
   \item Citations of the creator If your work uses existing assets. [Yes]
   \item The license information of the assets, if applicable. [Not Applicable]
   \item New assets either in the supplemental material or as a URL, if applicable. [Not Applicable]
   \item Information about consent from data providers/curators. [Not Applicable]
   \item Discussion of sensible content if applicable, e.g., personally identifiable information or offensive content. [Not Applicable]
 \end{enumerate}

 \item If you used crowdsourcing or conducted research with human subjects, check if you include:
 \begin{enumerate}
   \item The full text of instructions given to participants and screenshots. [Not Applicable]
   \item Descriptions of potential participant risks, with links to Institutional Review Board (IRB) approvals if applicable. [Not Applicable]
   \item The estimated hourly wage paid to participants and the total amount spent on participant compensation. [Not Applicable]
 \end{enumerate}

 \end{enumerate}
\appendix
\onecolumn
\setcounter{theorem}{0}
\setcounter{definition}{1}
\setcounter{proposition}{0}
\setcounter{corollary}{0}
\setcounter{algorithm}{0}
\section{THEORETICAL PROOFS}
\label{app:A}
In this section, we provide additional details on all theoretical results and conduct missing proofs.
To prove our main result stated in Theorem 1, we rely on the assumption that the experienced shift is gradually Lipschitz smooth, which we restate below:
\begin{definition} A distribution shift over $\{(X_t, Y_t)\}_{t=0}^T$ is called gradually Lipschitz smooth in $X_t$ if for a cost function $c: \mathcal{X} \times \mathcal{X} \rightarrow \mathbb{R}^+$ we have $\mathcal{W}(P_t(X_t), P_{t-1}(X_{t-1})) \le \varepsilon_t $ for all $t=1, \dots , T$ and there exist constants $L_t>0$ such that for any realizations $x_t$, $x_{t-1}$ of $X_t, X_{t-1}$ it holds:
\begin{align*}
    \mathcal{W}\big(P_t(Y_t|X_t=x_t), P_{t-1}(Y_{t-1}|X_{t-1}=x_{t-1})\big)\le L_t \; c(x_t, x_{t-1})
\end{align*}
\end{definition}
Any shift satisfying this property is also gradual (see Definition 1 in the main paper) in the typically assumed sense:

\begin{proposition}
If a distribution shift over $\{(X_t, Y_t)\}_{t=0}^T$ is gradually Lipschitz smooth in $X_t$ with constants $L_t$, then it is also gradual:
\begin{align*}
    \mathcal{W}(P_t(Y_t, X_t), P_{t-1}(Y_{t-1}, X_{t-1})) \le (1+L_t)\;\varepsilon_t := \Delta_t
\end{align*}
\end{proposition}
\begin{proof}
Measuring the Wasserstein distance between two joint distributions requires a cost metric $c_{xy}$ operating on the product space $(\mathcal{X} \times \mathcal{Y})$. Note that if $c_x$ is a cost metric on $\mathcal{X}$ and $c_y$ on $\mathcal{Y}$, a natural choice is to simply consider $c_{xy}$ to be separable:  $c_{xy}((x,y), (x',y')) = c_x(x, x')+ c_y(y, y')$.
Let $\Pi\big(P_t(Y_t, X_t), P_{t-1}(Y_{t-1}, X_{t-1})\big)$ be the space of all valid couplings over the joint distributions of two consecutive domains. Furthermore let $\Pi_x\big(P_t(X_t), P_{t-1}(X_{t-1})\big)$ bet the set of all couplings over marginals in $X$ and given an tuple $x=(x_t, x_{t-1}$), $\Pi_{y|x}\big(P_t(Y_t|X_t=x_t), P_{t-1}(Y_{t-1}|X_{t-1}=x_{t-1})\big)$ be the set of couplings over all target conditionals. Then we have:
\begin{align*}
    \mathcal{W}^{c_{xy}}(P_t(Y_t, X_t), P_{t-1}(Y_{t-1}, X_{t-1})) &= \inf_{\pi \in \Pi} \mathbb{E}_{\pi }\big[  c_x(x_t, x_{t-1}) + c_y(y_t, y_{t-1})]\\
    & \le \inf_{\pi_x \in \Pi_x} \mathbb{E}_{(x_t, x_{t-1}) \sim \pi_x(X_t, X_{t-1}) }\big[ c_x(x_t, x_{t-1}) \big] \quad +  \\ & \qquad \mathbb{E}_{(x_t, x_{t-1}) \sim \pi_x(X_t, X_{t-1}) }\Big[ \inf_{\pi_{y|x} \in \Pi_{y|x}} \mathbb{E}_{(y_t, y_{t-1}) \sim \pi_{y|x}(y_t, y_{t-1} |x_t, x_{t-1}) }\big[ c_x(y_t, y_{t-1}) \big]  \Big] \\
    & = \mathcal{W}^{c_{x}}(P_t(X_t), P_{t-1}(X_{t-1})) \quad + \\ &\qquad \mathbb{E}_{(x_t, x_{t-1}) \sim \pi(x_t, x_{t-1}) }\big[ \mathcal{W}^{c_y}(P_t(Y_t |X_t = x_t), P_{t-1}(Y_{t-1} | X_{t-1}=x_{t-1})) \big] \\
    &\le \mathcal{W}^{c_{x}}(P_t(X_t), P_{t-1}(X_{t-1})) + L_t \; \mathbb{E}_{(x_t, x_{t-1}) \sim \pi_x(X_t, X_{t-1}) }\big[ c_x(x_t, x_{t-1}) \big] \\
    &\le (1+L_t) \mathcal{W}^{c_{x}}(P_t(X_t), P_{t-1}(X_{t-1})) \le (1+L_t)\varepsilon_t
\end{align*}
\end{proof}

For more technical details on working with transportation costs in product spaces, we refer to Appendix A of \citep{gozlan2010transport}.
This implies, that Definition 2 simply describes a gradual shift, where the overall change in the joint  $P_t(X_t, Y_t)$ is dominated by the change in $P_t(X_t)$. This assumption is also common in the domain adaptation literature and relates for instance to the property of being \textit{Probabilistic Transfer Lipschitz} with respect to a labeling function analyzed in \citep{courty2017joint}.

\paragraph{Proving Theorem 1 and Deriving Corollary 1}
Next we conduct the proof of Theorem 1 and discuss the resulting Corollary 1 mentioned in the main paper.
\begin{theorem} 
    Let $\mathcal{L}: \mathcal{Y}\times \mathcal{Y} \rightarrow \mathbb{R}$ be a loss function that is 1-Lipschitz in its second argument and denote the true model performance at time $t$ with $\mathcal{L}_t$.
    If a distribution shift over $\{(X_t, Y_t)\}_{t=1}^T$ is gradually Lipschitz smooth in $X_t$, then:
    \begin{align*}
        | \mathcal{L}_t - \hat{\mathcal{L}}_t^{\textit{IUPM}}| \le \sum_{i=1}^t L_t \; \varepsilon_t
    \end{align*}
\end{theorem}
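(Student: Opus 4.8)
The plan is to first reduce the error on the loss to an averaged Wasserstein distance between the true conditional $P_t(Y_t\mid X_t)$ and the IUPM surrogate $\hat P(Y_t\mid X_t)$, and then to control this distance by a telescoping argument over the incremental couplings. For the reduction: fix $x_t$ and note that $g(\cdot):=\mathcal{L}(f(x_t),\cdot)$ is $1$-Lipschitz in its argument, so Kantorovich--Rubinstein duality gives
$$\big| \mathbb{E}_{P_t(Y_t\mid X_t=x_t)}[g(Y_t)] - \mathbb{E}_{\hat P(Y_t\mid X_t=x_t)}[g(Y_t)] \big| \le \mathcal{W}\big( P_t(Y_t\mid X_t=x_t),\, \hat P(Y_t\mid X_t=x_t)\big).$$
Averaging over $X_t\sim P_t$ and using $|\mathbb{E}[\cdot]|\le\mathbb{E}|\cdot|$ leaves me to bound $\mathbb{E}_{P_t(X_t)}\big[\mathcal{W}(P_t(Y_t\mid X_t),\hat P(Y_t\mid X_t))\big]$ by $\sum_{i=1}^t L_i\varepsilon_i$.

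For the telescoping part, I would introduce, for $0\le k\le t$, the partial composition $\Psi_{t\to k}(X_k\mid X_t):=\prod_{i=k+1}^{t}\gamma_i(X_{i-1}\mid X_i)$ (so $\Psi_{t\to 0}=\Psi_t$ and $\Psi_{t\to t}=\mathrm{Id}$) and the surrogate $Q_k(Y_t\mid X_t):=\mathbb{E}_{\Psi_{t\to k}(X_k\mid X_t)}[P_k(Y_k\mid X_k)]$, so that $Q_t=P_t(Y_t\mid X_t)$ and $Q_0=\hat P(Y_t\mid X_t)$. By the triangle inequality for $\mathcal{W}$ applied pointwise in $x_t$ it suffices to bound each $\mathbb{E}_{P_t(X_t)}[\mathcal{W}(Q_k,Q_{k-1})]$. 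Since $\Psi_{t\to k-1}=\Psi_{t\to k}\,\gamma_k$, both $Q_k(\cdot\mid x_t)$ and $Q_{k-1}(\cdot\mid x_t)$ are mixtures over $x_k$ with identical weights $\Psi_{t\to k}(x_k\mid x_t)$, of $P_k(Y_k\mid X_k=x_k)$ and $\mathbb{E}_{\gamma_k(X_{k-1}\mid X_k=x_k)}[P_{k-1}(Y_{k-1}\mid X_{k-1})]$ respectively. Invoking joint convexity of $\mathcal{W}$ under mixtures with a common weight vector (first over $x_k$, then over $x_{k-1}$ inside the second argument) together with the gradually-Lipschitz-smooth bound $\mathcal{W}(P_k(Y_k\mid X_k=x_k),P_{k-1}(Y_{k-1}\mid X_{k-1}=x_{k-1}))\le L_k\,c(x_k,x_{k-1})$ yields
$$\mathcal{W}\big(Q_k(\cdot\mid x_t),Q_{k-1}(\cdot\mid x_t)\big) \le L_k \sum_{x_k}\Psi_{t\to k}(x_k\mid x_t)\sum_{x_{k-1}}\gamma_k(x_{k-1}\mid x_k)\,c(x_k,x_{k-1}).$$

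To finish, I would average over $x_t$: because each $\gamma_i(X_{i-1}\mid X_i)$ pushes $P_i(X_i)$ forward to $P_{i-1}(X_{i-1})$, the composition $\Psi_{t\to k}$ pushes $P_t(X_t)$ to $P_k(X_k)$, hence $\sum_{x_t}P_t(x_t)\Psi_{t\to k}(x_k\mid x_t)=P_k(x_k)$ and the double sum collapses to $\mathbb{E}_{\gamma_k}[c(X_k,X_{k-1})]=\mathcal{W}(P_k(X_k),P_{k-1}(X_{k-1}))\le\varepsilon_k$, the equality using that $\gamma_k$ is the optimal coupling. Summing over $k$ and combining with the first paragraph gives $|\mathcal{L}_t-\hat{\mathcal{L}}_t^{\textit{IUPM}}|\le\sum_{k=1}^t L_k\varepsilon_k$; and since replacing $\gamma_k$ by any other incremental coupling only enlarges $\mathbb{E}_{\gamma_k}[c(X_k,X_{k-1})]$, the same computation proves that the optimal couplings minimize the bound, i.e. Corollary 1. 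The step I expect to need the most care is the joint-convexity argument: one should verify that for a shared weight vector $(\lambda_j)$ the plan $\sum_j\lambda_j\pi_j$ built from the componentwise optimal plans $\pi_j$ is feasible between $\sum_j\lambda_j\mu_j$ and $\sum_j\lambda_j\nu_j$ with cost $\sum_j\lambda_j\mathcal{W}(\mu_j,\nu_j)$, and then keep the bookkeeping of the composed conditional couplings $\Psi_{t\to k}$ and their marginal-preservation property straight so that the averaging in the last step collapses cleanly to $\mathbb{E}_{\gamma_k}[c]$.
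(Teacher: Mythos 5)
Your proof is correct and follows essentially the same route as the paper's: Kantorovich--Rubinstein duality to reduce the loss gap to a Wasserstein distance between the true and surrogate label conditionals, a telescoping/triangle-inequality decomposition over the incremental couplings, the gradual Lipschitz smoothness to convert each increment into a transport cost, and marginal preservation to collapse the result to $\sum_i L_i\,\mathcal{W}(P_i(X_i),P_{i-1}(X_{i-1}))\le\sum_i L_i\varepsilon_i$, with Corollary 1 falling out of the same computation. Your interpolating mixtures $Q_k$ together with joint convexity of $\mathcal{W}$ are in fact a cleaner formalization of the step the paper handles by a path-wise triangle inequality over implicitly sampled intermediate points $x_i$.
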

\begin{proof}
Let $\gamma_t( X_{t-1},X_t)$ be the incremental optimal transport couplings in $X$ and $\Psi_t(X_0|X_t) $ be the composition of all incremental transition probabilities:
\begin{align*}
    \Psi_t(X_0 | X_t) = \int_{\mathcal{X}} \dots \int_{\mathcal{X}} 
    \gamma_1(X_0 | x_1) \gamma_2(x_1 | x_2) \dots \gamma_t(x_{t-1} | X_t) \, 
    dx_1 \dots dx_{t-1}
\end{align*}
Note that in the discrete case, all $\gamma_i(X_{i-1}|X_i)$ are matrices and this composition can equivalently be expressed using iterative matrix multiplication: $\Psi_t(X_0|X_t)=\prod_{i=1}^t \gamma_i(X_{i-1}|X_i)$. For the estimated target distribution $\hat{P}(Y_t|X_t)=\mathbb{E}_{\Psi_t(X_0|X_t)}\left[P(Y_0|X_0) \right]$ it holds:
\begin{align*}
    \lvert \mathcal{L}_t - \hat{\mathcal{L}}_t^{\textit{IUPM}}| &= |\mathbb{E}_{(x_t, y_t) \sim P_t(X_t, Y_t)}\big[ \mathcal{L}(f(x_t), y_t) \big] - \mathbb{E}_{(x_t, \hat{y}_t) \sim P_t(X_t, Y_t)}\big[ \mathcal{L}(f(x_t), \hat{y}_t) \big] \rvert \\ 
    & \le \mathbb{E}_{x_t \sim P_t(X_t)}\big[ \lvert \mathbb{E}_{y_t \sim P_t(Y_t | X_t = x_t)}[ \mathcal{L}(f(x_t), y_t)] - \mathbb{E}_{\hat{y}_t \sim \hat{P}_t(Y_t | X_t = x_t)}[ \mathcal{L}(f(x_t), \hat{y}_t)]  \rvert \big]
\end{align*}
Since we assume $\mathcal{L}$ to be 1-Lipschitz in its second argument $(\mathcal{L}(\cdot, y) \in \text{Lip}_1)$ we know that there exists a cost function $c_y: \mathcal{Y}\times \mathcal{Y} \rightarrow \mathbb{R}^+$ with $\lvert \mathcal{L}(f(x_t), y) -\mathcal{L}(f(f_x), y')\rvert \le c_y(y, y')$ for any fixed $x_t$. Hence, one can apply the Kantorovich-Rubenstein duality (Theorem 5.10 in \citep{villani2009optimal}):
\begin{align*}
    \mathbb{E}_{x_t \sim P_t(X_t)}&\big[ \lvert \mathbb{E}_{y_t \sim P_t(Y_t | X_t = x_t)}[ \mathcal{L}(f(x_t), y_t)] - \mathbb{E}_{\hat{y}_t \sim \hat{P}_t(Y_t | X_t = x_t)}[ \mathcal{L}(f(x_t), \hat{y}_t)]  \rvert \big] \\
    &\le \mathbb{E}_{x_t \sim P_t(X_t)}\big[ \lvert \sup_{\phi \in \text{Lip}_1} \left\{ \mathbb{E}_{y_t \sim P_t(Y_t | X_t = x_t)}[ \phi( y_t)] - \mathbb{E}_{\hat{y}_t \sim \hat{P}_t(Y_t | X_t = x_t)}[ \phi( \hat{y}_t)]\right\}  \rvert \big] \\
    &\le \mathbb{E}_{x_t \sim P_t(X_t)}\Big[ \mathbb{E}_{x_0 \sim \Psi_t(X_0|X_t)}\big[ \sup_{\phi \in \text{Lip}_1} \left\{ \mathbb{E}_{y_t \sim P_t(Y_t | X_t = x_t)}[ \phi( y_t)] - \mathbb{E}_{y_0 \sim P_0(Y_0 | X_0 = x_0)}[ \phi(y_0)]\right\}  \big] \Big] \\
    &= \mathbb{E}_{x_t \sim P_t(X_t)}\Big[ \mathbb{E}_{x_0 \sim \Psi_t(X_0|X_t)}\big[ \mathcal{W}^{c_y}\big(P_t(Y_t | X_t = x_t), P_0(Y_0 | X_0 = x_0)\big)\big]\Big] \\ 
    &\le \mathbb{E}_{x_t \sim P_t(X_t)}\Big[ \mathbb{E}_{x_0 \sim \Psi_t(X_0|X_t)}\big[\sum_{i=1}^t \mathcal{W}^{c_y}\big(P_i(Y_i | X_i = x_i), P_{i-1}(Y_{i-1} | X_{i-1} = x_{i-1})\big)\big]\Big]
\end{align*}
where the last step follows from the fact that the Wasserstein distance satisfies the triangular inequality. Now one can use the assumption that the shift is gradually Lipschitz smooth in $X_t$ for a cost metric $c_x: \mathcal{X} \times \mathcal{X} \rightarrow \mathbb{R}^+$: 
\begin{align*}
\mathbb{E}_{x_t \sim P_t(X_t)}&\Big[ \mathbb{E}_{x_0 \sim \Psi_t(X_0|X_t)}\big[\sum_{i=1}^t \mathcal{W}^{c_y}\big(P_i(Y_i | X_i = x_i), P_{i-1}(Y_{i-1} | X_{i-1} = x_{i-1})\big)\big]\Big] \\
&\le  \mathbb{E}_{x_t \sim P_t(X_t)}\Big[ \mathbb{E}_{x_0 \sim \Psi_t(X_0|X_t)}\big[\sum_{i=1}^t L_t \; c_x( x_i, x_{i-1})\big]\Big]
\end{align*}
Lastly, by decomposing the overall transition probabilities again into the incremental cost-minimizing ones and rearranging all expectations we have:
\begin{align*}
   \mathbb{E}_{x_t \sim P_t(X_t)}\Big[ \mathbb{E}_{x_0 \sim \Psi_t(X_0|X_t)}\big[\sum_{i=1}^t L_t \; c_x( x_i x_{i-1})\big]\Big]
  &=  \big( \sum_{i=1}^t  L_t \;\mathbb{E}_{(x_i, x_{i-1}) \sim \gamma_i(X_i,X_{i-1}) } \big[ c(x_i, x_{i-1})\big]  \big) \\
   &= \sum_{i=1}^t L_t \; \mathcal{W}^{c_x}\big(P_i( X_i), P_{i-1}(X_{i-1})\big)  \le \sum_{i=1}^t L_t \; \varepsilon_t
\end{align*}
\end{proof}
Notice that if an alternative estimator $\tilde{P}(Y_t|X_t)$ is constructed by composing arbitrary incremental couplings $\pi_i$ instead of the cost-optimal ones, the entire derivation is the same until the last part. Thus, bounding the error of the resulting performance estimate $\tilde{\mathcal{L}}_t^{\pi}$ yields:
\begin{align*}
    \lvert \mathcal{L}_t - \tilde{\mathcal{L}}_t^{\pi}| \le  \big( \sum_{i=1}^t L_t \; \mathbb{E}_{(x_i, x_{i-1}) \sim \pi_i(X_i,X_{i-1}) } \big[ c(x_i, x_{i-1})\big]  \big)
\end{align*}
Taking the infimum over all possible couplings $\pi_i$ on the right-hand side to minimize the upper bound results $\pi_i =  \gamma_i$ by the definition of the optimal transport couplings. Hence, choosing incremental couplings based on optimal transport does effectively optimizes an upper bound on the performance estimation error, which implies Corollary 1:
\begin{corollary}
If each $\gamma_t$ is the optimal transport coupling between two consecutive domains $X_{t-1}$ and $X_t$, then this minimizes the estimation error across all possible incremental couplings. 
\end{corollary}

\section{FURTHER DETAILS ON THE IUPM ALGORITHM}
\label{app:B}
In the following, we provide an overview of the IUPM algorithm and subsequently discuss algorithmic complexity.
\subsection{Overview of the IUPM algorithm}
Algorithm 1 describes the proposed procedure to estimate the performance of the model $f$ over time steps $t$.
\begin{algorithm}
\caption{IUPM algorithm}
\begin{algorithmic}[1]
\label{alg:algo1}
\Procedure{IUPM}{$f$, $x_{init}$, $th_U$}
\State \textbf{Input:} $f$: ML model, $x_{init}$: Initialization dataset, $th_U$: Uncertainty threshold
\State \textbf{Output:} $\hat{\mathcal{L}}_t^{\textit{IUPM}}$: Performance estimation, $\mathcal{U}(\hat{\mathcal{L}}_t^{\textit{IUPM}})$: Inherent uncertainty measure

\For{timesteps $t$}
    \State Sample $x_t$ from $\Omega_{t}$
    \State Calculate $\hat{\gamma}_t = \arg \min_{\gamma \in \Gamma} \sum_{x_{t-1} \in \Omega_{t-1}}\sum_{x_{t} \in \Omega_{t}} c(x_{t-1}, x_t) \gamma(x_{t-1},x_t)$ 
    \Comment{Optimal transport matching}
    \State Update $\Psi_t(X_0|X_t)=\prod_{i=1}^t \gamma_i(X_{i-1}|X_i)$
    \Comment{Update transition matrix}
    \State Calculate $\hat{\mathcal{L}}_t^{\textit{IUPM}} = \frac{1}{n_t} \sum_{x_t\in \Omega_t} \mathbb{E}_{\hat{P}(Y_t|X_t=x_t)} \left[ \mathcal{L}(f(X_t), Y_t) \right]$
    \State Get $\mathcal{U}(\hat{\mathcal{L}}_t^{\textit{IUPM}}) = \mathbb{E}_{P(X_t)} \text{SD}_{\hat{P}(Y_t|X_t)}\left[\mathcal{L}(f(X_t), Y_t)\right]$
    
    \If{$\mathcal{U}(\hat{\mathcal{L}}_t^{\textit{IUPM}}) > th_{U}$}
        \State Get $m$ samples $x_t$ based on $\arg\text{top-}m_{x_t \in \Omega_t} \mathcal{U}\left[\mathcal{L}(f(x_t), Y_t)\right]$
        \State Query labels $y_t$ for samples $x_t$ from user
        \State Correct $\Psi_t(X_0|X_t)$ such that $\hat{P}(Y_t|X_t=x_t)$ assigns $y_t$
    \EndIf
\EndFor
\EndProcedure
\end{algorithmic}
\end{algorithm}

\subsection{Discussion of Coupling Estimation and Algorithmic Complexity}
The crucial algorithmic components of IUPM are the computations of the incremental transport couplings $\gamma_i$. They result from solving an optimal transport coupling for which a variety of different computational approaches exist \citep{peyre2019computational}. We utilized a popular approach to increase the estimation efficiency using entropic regularization \citep{cuturi2013sinkhorn, flamary2021pot}. Let $KL(\cdot |\cdot)$ denote the KL-divergence between two distributions and let $\lambda$ be a hyperparameter capturing the regularization strength, then the objective reads:
\begin{align*}
\min_{\gamma \in \Gamma} \sum_{x_0 \in \Omega_0 }\sum_{x_1 \in \Omega_1 } c(x_0, x_1) \gamma(x_0,x_1) + \lambda KL(\gamma |\hat{p}_0 \otimes \hat{p}_1 ) \quad \text{with}  \quad \Gamma =\{\gamma \in \mathbb{R}^{n_0 \times n_1}\; |\; \gamma \mathbf{1}_{n_1} = \hat{p}_0,  \gamma^T\mathbf{1}_{n_0} = \hat{p}_1\}
\end{align*}
This can efficiently be solved using the Sinkhorn algorithm \citep{cuturi2013sinkhorn}, which has sample complexity of $\mathcal{O}(1/\sqrt{n})$ and time complexity of $\mathcal{O}(n^2)$, where $n$ is the number of samples to be matched from each domain \citep{genevay2019sample}. Note that IUPM requires solving one optimal transport problem for every time point of assessment during model deployment.  

\section{ADDITIONAL DETAILS ON EXPERIMENTS}
\label{app:C}
In this section, we provide additional details on the experiments performed.

\subsection{Computational Environment}
All numerical experiments are implemented in Python (version 3.9.13) using PyTorch (version 1.13.0) and have been computed on an Nvidia RTX A5000 GPU with CUDA 11.7 and two physical AMD EPYC 7502P 32-Core CPUs running on Linux Ubuntu.

\subsection{Method Implementation}
\paragraph{Baselines}
For the four confidence-based baseline methods, we rely on the implementations provided by \citep{gargleveraging}. For the ATC method \citep{gargleveraging}, we use the author's proposed maximum confidence score function.
\paragraph{IUPM and NIPM}
For our IUPM and NIPM implementation, we rely on the entropic regularization optimal transport implementation with logarithmic Sinkhorn by \citep{flamary2021pot}. As the cost function for calculating optimal transport, we use the squared Euclidean distance throughout all experiments. Similar to the baseline methods, we also consider the model accuracy as the loss criterion $\mathcal{L}$ to evaluate performance in the conducted experiments.
In all experiments the uncertainty intervention threshold $\mathcal{U}(\mathcal{L}_t) > 0.1$ is set to trigger a relabeling of 50\% of the samples in step $t$
\subsection{Experiments}
\paragraph{Translation and Rotation in Input Space}
As for the synthetic two-dimensional datasets, we used the data generator functionality provided by \citep{scikit-learn}. The "Clusters" data set is generated using the make\_blobs function with a distance parameter of $1.0$. The "Moons" and "Circles" data sets are generated using the corresponding functions with a noise parameter of $0.2$ and a circle factor of $0.3$.
For the training and initialization step, a training set of $800$ samples is generated, from which a validation and initialization set $\Omega_0$ of $200$ samples is partitioned. In each consecutive step, a set $\Omega_k$ with a different random seed is generated. We then apply a shift to the set corresponding to the step $k$, i.e., $k\cdot 2^{\circ}$ for rotation and $k\cdot 0.02$ for translation.\\
For the synthetic data, we use a Random Forest Classifier (RF) and an XGBoost Classifier (XGB) \citep{chen2016xgboost} with $50$ estimators and a maximum depth of $5$ as well as a Multilayer Perceptron (MLP) with a single hidden layer of size $128$. We use a regularization parameter of $10^{-4}$ for optimal transport matching.
\paragraph{Monitoring Performance Degradation due to
Image Perturbations}
For the experiment based on MNIST, we apply three different affine transformations to the original digits. For this, we adapt the corresponding image perturbation implementation introduced in \citep{mu2019mnist} to the continuous setting. The used LeNet model has been trained for $100$ epochs with early stopping based on patience of $10$ epochs and PyTorch’s Adam optimizer with a batch size of $16$ and a learning rate of $1e-3$. For optimal transport matching, we use the representations after the second fully connected layer of the LeNet model and a regularization parameter of $1$. The sets $\Omega_t$, including the initialization, set $\Omega_0$, each consists of $200$ distinct samples from the test set. For ImageNet we fine-tuned a pre-trained ResNet-50 \citep{he2016deep} model on the ten classes included in the Imagenette subset \citep{Howard_Imagenette_2019} of ImageNet. For this we use PyTorch’s Adam optimizer with a batch size of $16$ and a learning rate of $1e-5$. We fine-tuned the model for $50$ epochs with early stopping and a patience of $10$ epochs. We analyzed 7 different shifts from $\texttt{ImageNet-c}$ \citep{hendrycks2018benchmarking} that can be considered gradual, each based on $500$ samples. Note that $\texttt{ImageNet-c}$ provides each shift in five predefined strengths and we have additionally interpolated all shifts for a total number of $20$ steps. All incremental couplings have been computed based on the network activation of the last layer before the classification layer with a Sinkhorn regularization parameter of $1e-4$.
\paragraph{Monitoring Performance Degradation due to
Real-World Temporal Shifts}
For the pre-processing steps as well as the network architecture used for the portrait experiment, we rely on the implementation provided by \citep{yao2022wild}. By this, images are of shape $32\times32$ and the used YearbookNetwork consists of four convolutional blocks with 32 channels and a single linear classification layer. The model has been trained for $300$ epochs with early stopping based on patience of $5$ epochs and PyTorch’s Adam optimizer with a batch size of $32$ and a learning rate of $1e-3$. For the optimal transport matching, we use the representations after the last convolutional block of the YearbookNetwork and a regularization parameter of $1e-3$.

\section{ADDITIONAL EXPERIMENTAL RESULTS}
\label{app:D}
In this section, we present additional results that extend our evaluation.
\paragraph{Confidence Intervals} To validate the statistical significance of our results on the synthetic and MNIST datasets in Table \ref{tab:syn_shifts} and Table \ref{tab:mnist_shifts}, we provide the confidence intervals in Tables \ref{tab:syn_shift_conf} and \ref{tab:mnist_shift_conf}, respectively.
Both tables show that there is very little change across five different random seeds.
\begin{table*}
\caption{Mean Average Error (MAE) for five different random seeds between ground truth and estimated accuracy using baseline methods and IUPM for the three synthetic data sets and three different models.}
\centering
\resizebox{\textwidth}{!}{
\begin{tabular}{@{}lccccccccc@{}}
\toprule
\multirow{2}{*}{\textbf{Method}} &
\multicolumn{3}{c}{\textbf{Clusters}} &
\multicolumn{3}{c}{\textbf{Moons}} &
\multicolumn{3}{c}{\textbf{Circles}} \\ \cmidrule(l){2-10} 
& \textbf{RF} & \textbf{XGB} & \textbf{MLP} & \textbf{RF} & \textbf{XGB} & \textbf{MLP} & \textbf{RF} & \textbf{XGB} & \textbf{MLP} \\ \midrule
ATC & 0.4413$_{\pm 0.0026}$ & 0.4664$_{\pm 0.0040}$ & 0.4348$_{\pm 0.0015}$ & 0.3788$_{\pm 0.0028}$ & 0.3594$_{\pm 0.0015}$ & 0.3679$_{\pm 0.0014}$ & 0.3514$_{\pm 0.0012}$ & 0.3442$_{\pm 0.0016}$ & 0.3531$_{\pm 0.0014}$ \\
AC & 0.4313$_{\pm 0.0028}$ & 0.4875$_{\pm 0.0030}$ & 0.4402$_{\pm 0.0017}$ & 0.3243$_{\pm 0.0020}$ & 0.3741$_{\pm 0.0016}$ & 0.3529$_{\pm 0.0014}$ & 0.2760$_{\pm 0.0012}$ & 0.3325$_{\pm 0.0013}$ & 0.3240$_{\pm 0.0008}$ \\
DOC & 0.4360$_{\pm 0.0027}$ & 0.4527$_{\pm 0.0031}$ & 0.4488$_{\pm 0.0017}$ & 0.3632$_{\pm 0.0021}$ & 0.3493$_{\pm 0.0017}$ & 0.3701$_{\pm 0.0014}$ & 0.3574$_{\pm 0.0012}$ & 0.3418$_{\pm 0.0013}$ & 0.3473$_{\pm 0.0009}$ \\
IM & 0.4525$_{\pm 0.0024}$ & 0.4662$_{\pm 0.0030}$ & 0.4705$_{\pm 0.0015}$ & 0.3955$_{\pm 0.0022}$ & 0.3599$_{\pm 0.0017}$ & 0.3800$_{\pm 0.0015}$ & 0.3559$_{\pm 0.0012}$ & 0.3431$_{\pm 0.0013}$ & 0.3488$_{\pm 0.0011}$ \\
NIPM & 0.4225$_{\pm 0.0034}$ & 0.4186$_{\pm 0.0037}$ & 0.4673$_{\pm 0.0014}$ & 0.2482$_{\pm 0.0032}$ & 0.2286$_{\pm 0.0033}$ & 0.2319$_{\pm 0.0024}$ & 0.0775$_{\pm 0.0012}$ & 0.0791$_{\pm 0.0022}$ & 0.0742$_{\pm 0.0016}$ \\
IUPM & 0.2914$_{\pm 0.0132}$ & 0.2894$_{\pm 0.0139}$ & 0.3035$_{\pm 0.0120}$ & 0.0781$_{\pm 0.0091}$ & 0.0793$_{\pm 0.0085}$ & 0.1020$_{\pm 0.0075}$ & 0.0352$_{\pm 0.0044}$ & 0.0390$_{\pm 0.0040}$ & 0.0359$_{\pm 0.0040}$ \\
IUPM$_{UI}$ & 0.0322$_{\pm 0.0033}$ & 0.0307$_{\pm 0.0037}$ & 0.0331$_{\pm 0.0029}$ & 0.0250$_{\pm 0.0019}$ & 0.0250$_{\pm 0.0018}$ & 0.0230$_{\pm 0.0015}$ & 0.0136$_{\pm 0.0017}$ & 0.0144$_{\pm 0.0021}$ & 0.0138$_{\pm 0.0020}$ \\
\hline
\end{tabular}}
\label{tab:syn_shift_conf}
\end{table*}
\begin{table*}
\caption{Mean Average Errors (MAE) for five different random seeds between ground truth and estimated accuracy for a LeNet across three different shifts on the MNIST data set.}
\centering
\footnotesize
\begin{tabular}{lccc}
\toprule
Method & Rotation & Scaling & Translation \\
\midrule
ATC & 0.4836$_{\pm 0.0082}$ & 0.1763$_{\pm 0.0086}$ & 0.3111$_{\pm 0.0024}$ \\
AC & 0.4931$_{\pm 0.0086}$ & 0.2292$_{\pm 0.0059}$ & 0.3842$_{\pm 0.0061}$ \\
DOC & 0.5181$_{\pm 0.0086}$ & 0.2538$_{\pm 0.0060}$ & 0.4090$_{\pm 0.0061}$ \\
IM & 0.6282$_{\pm 0.0083}$ & 0.6166$_{\pm 0.0061}$ & 0.5686$_{\pm 0.0097}$ \\
NIPM & 0.2187$_{\pm 0.0099}$ & 0.0676$_{\pm 0.0112}$ & 0.3110$_{\pm 0.0138}$ \\
IUPM & 0.0985$_{\pm 0.0064}$ & 0.0442$_{\pm 0.0077}$ & 0.1263$_{\pm 0.0084}$ \\
IUPM$_{UI}$ & 0.0719$_{\pm 0.0093}$ & 0.0438$_{\pm 0.0052}$ & 0.0777$_{\pm 0.0064}$ \\
\bottomrule
\end{tabular}
\label{tab:mnist_shift_conf}
\end{table*}
\paragraph{Labeling Interventions} Reducing the need for human interactions for our proposed IUPM method with human intervention limits the manual effort for a reliable performance estimate. In the main paper, we have already shown in Table \ref{tab:sampling_methods} that the number of interventions for our proposed Uncertainty Intervention (UI) method is significantly lower than for the other methods, while the quality of the estimate is better or on par. We performed an ablation study in Table \ref{tab:intervention_steps_abl} to validate that the reverse conclusion that UI provides a better performance estimation for the same number of interventions also holds true. In this experiment, we trigger the interventions at the same steps, taking as a reference the exceeding of the threshold given by the UI method.
\begin{table*}[h]
\caption{Comparison of the three intervention methods when provided the same annotation budget for the synthetic datasets}
\centering
\begin{tabular}{l*{9}{c}}
\toprule
\multirow{2}{*}{Method} & \multicolumn{3}{c}{Clusters} & \multicolumn{3}{c}{Moons} & \multicolumn{3}{c}{Circles} \\
\cmidrule(lr){2-4} \cmidrule(lr){5-7} \cmidrule(lr){8-10}
& RF & XGB & MLP & RF & XGB & MLP & RF & XGB & MLP \\
\midrule
IUPM$_{RI}$ & 0.0506 & 0.0485 & 0.0531 & 0.0336 & 0.0319 & 0.0341 & 0.0202 & 0.0165 & 0.0244 \\
IUPM$_{CEI}$ & 0.0600 & 0.0656 & 0.0575 & 0.0406 & 0.0274 & 0.0233 & 0.0230 & 0.0280 & 0.0292 \\
IUPM$_{UI}$ & \textbf{0.0270} & \textbf{0.0272} & \textbf{0.0265} & \textbf{0.0244} & \textbf{0.0242} & \textbf{0.0222} & \textbf{0.0160} & \textbf{0.0157} & \textbf{0.0158} \\
\bottomrule
\end{tabular}
\label{tab:intervention_steps_abl}
\end{table*}
\paragraph{Uncertainty Threshold} In the following we discuss the effect of the uncertainty threshold introduced by our method to trigger the intervention steps. Table \ref{tab:threshold} underlines the intuitive effect that lowering the threshold will lead to a higher number of interventions and thus queried ground truth labels over time. This additional label information helps to correct the estimation, leading to a trade-off between estimation quality and human intervention. In a practical application, this trade-off may be determined by external factors, such as a limited number of possible human interventions, making this hyperparameter useful for tailoring a monitoring system to a specific use case and application.

In our case, we determined the threshold by assessing the number of cumulative interventions combined with the average gain in performance estimation quality per intervention between two steps $\frac{\Delta MAE}{\Delta n_I}$. The value added per intervention begins to saturate between 0.08 and 0.12, so that the benefit of human intervention diminishes. In addition, we believe that an average of about 13 interventions in 100 steps is still reasonable for the observed improvement in the estimate. Based on these results from the synthetic experiments, we chose an uncertainty threshold of $0.1$, which proved to be robust in all other experiments.
\begin{table*}
\centering
\caption{Evaluation of the relative benefit on performance per intervention step across intervention thresholds from $0.20$ to $0.02$. This ratio is calculated on the reported average intervention number and average MAE across all three synthetic datasets and evaluated models.}
\vspace{0.5em}
\footnotesize
\setlength{\tabcolsep}{4pt}
\resizebox{\textwidth}{!}{
\begin{tabular}{lcccccccccc}
\toprule
Method & 0.20 & 0.18 & 0.16 & 0.14 & 0.12 & 0.10 & 0.08 & 0.06 & 0.04 & 0.02 \\
\midrule
MAE & 0.0356 & 0.0350 & 0.0312 & 0.0264 & 0.0231 & 0.0221 & 0.0213 & 0.0211 & 0.0201 & 0.0222 \\
\addlinespace
$n_I$ & 5.33 & 6.67 & 7.67 & 8.67 & 11.00 & 12.67 & 16.67 & 23.00 & 33.00 & 57.00 \\
\addlinespace
$\frac{\Delta MAE}{\Delta n_I}$ & & 4.23e-04 & 3.79e-03 & 4.82e-03 & 1.43e-03 & 5.74e-04 & 2.03e-04 & 3.62e-05 & 9.59e-05 & -8.71e-05 \\
\bottomrule
\end{tabular}}
\label{tab:threshold}
\end{table*}
\paragraph{Further Results on ImageNet-c}
Figures \ref{fig:imagentshifts1} and \ref{fig:imagentshifts2} provide further insight into the ImageNet-c experiments in Table \ref{tab:imagenet} by presenting the performance estimates and intervention steps over time.
\begin{figure}
    \centering
    \includegraphics[width=0.9\linewidth]{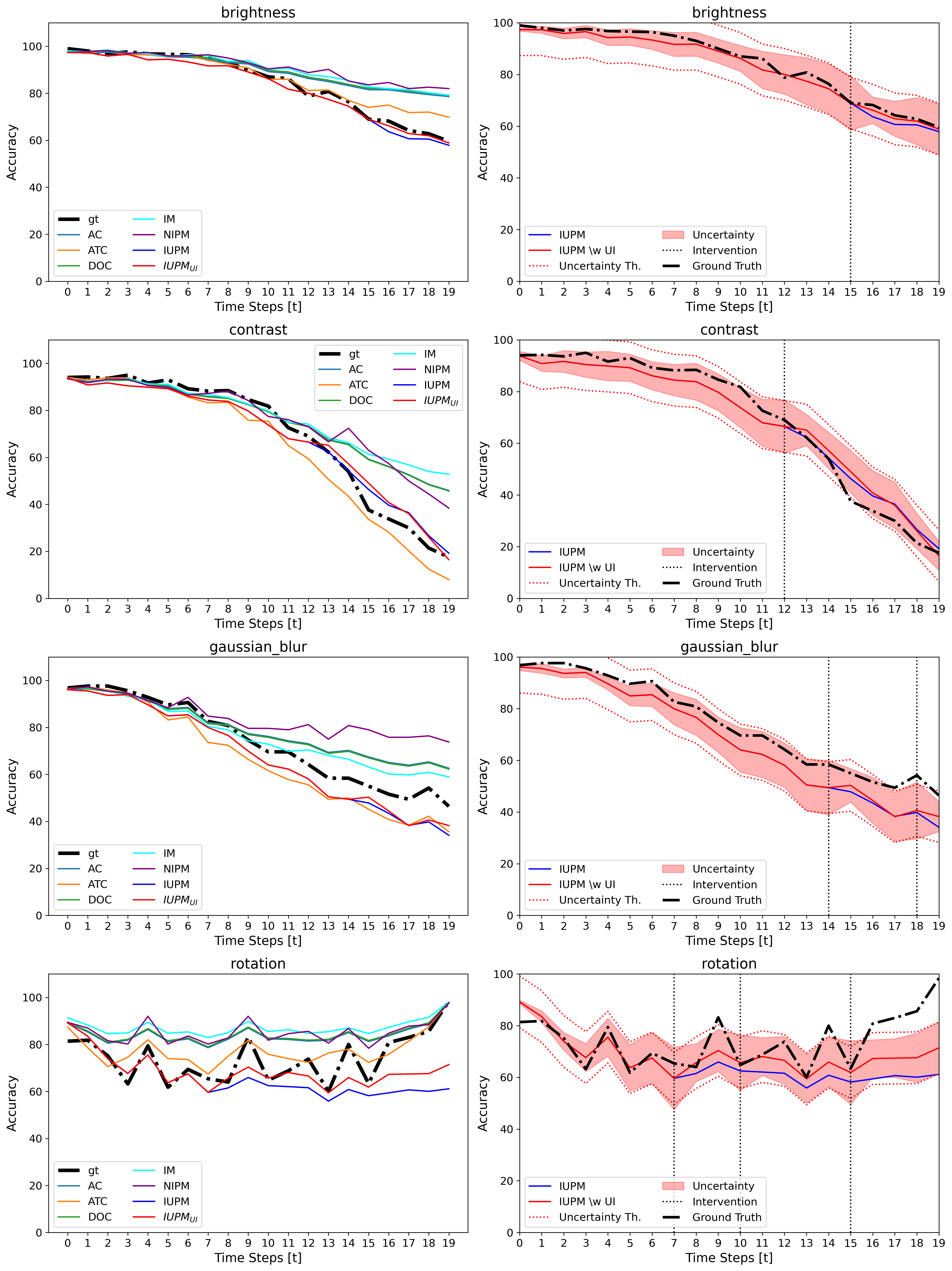}
    \caption{Results on different gradual shifts from ImageNet-c on a ResNet-50 model with 500 samples. Left: Evolution of actual ground truth performance (gt) and different performance estimation methods over time, where at each time point the shift strength increases. Right: Corresponding visualization of IUPM's inherent uncertainty measure and effect of triggered labeling interventions.}
    \label{fig:imagentshifts1}
\end{figure}

\begin{figure}
    \centering
    \includegraphics[width=0.9\linewidth]{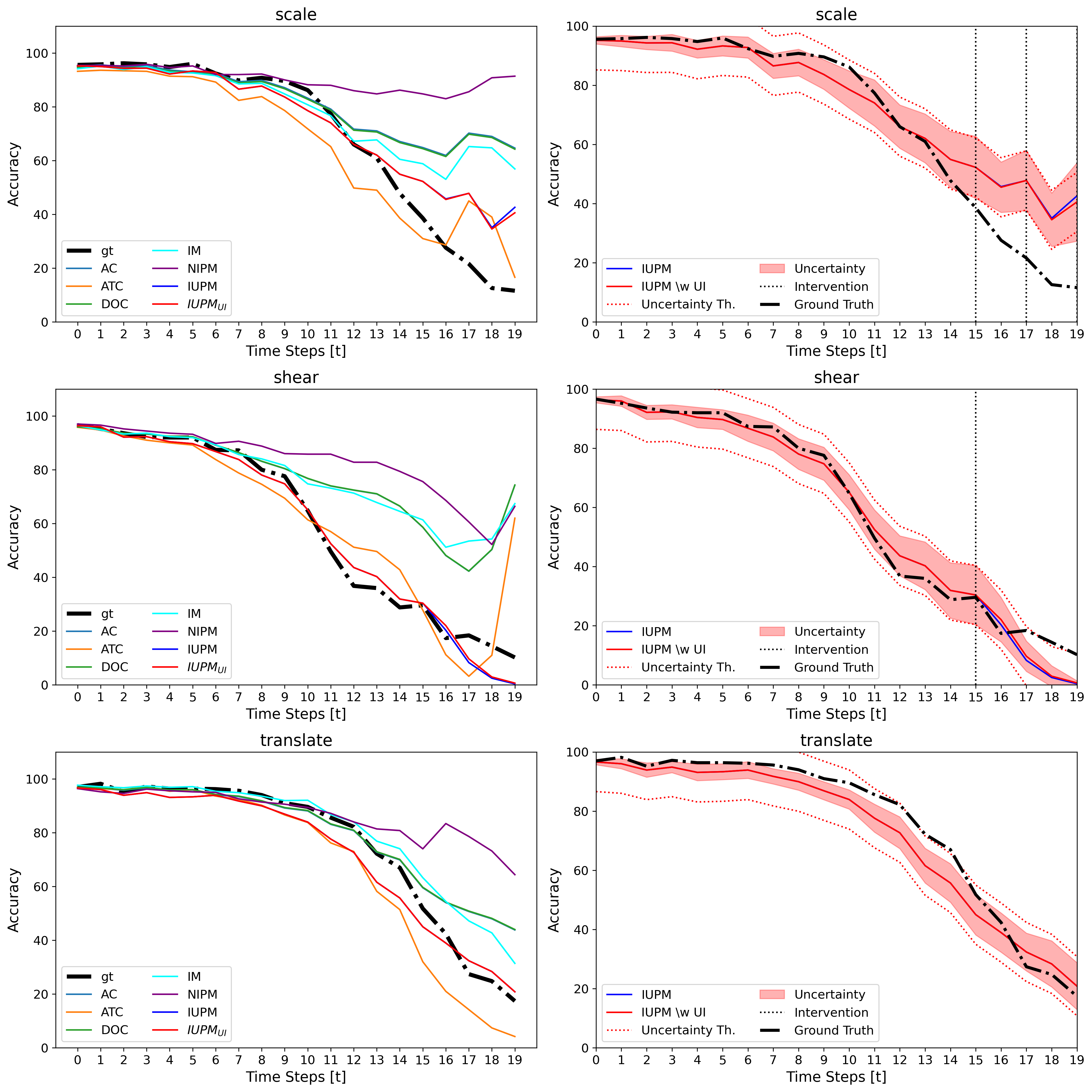}
    \caption{Results on different gradual shifts from ImageNet-c on a ResNet-50 model with 500 samples (continued). Left: Evolution of actual ground truth performance (gt) and different performance estimation methods over time, where at each time point the shift strength increases. Right: Corresponding visualization of IUPM's inherent uncertainty measure and effect of triggered labeling interventions.}
    \label{fig:imagentshifts2}
\end{figure}

\end{document}